\definecolor{MyRoyalBlue}{RGB}{65,105,225}
\begin{document}
\title{Co-modeling the Sequential and Graphical Routes for Peptide Representation Learning}
\titlerunning{Co-modeling the Sequential and Graphical Routes for Peptide}
%
\author{Zihan Liu\inst{1,2,3}\orcidID{0000-0001-6224-3823} \and
Ge Wang\inst{1,2,3}\orcidID{0000-0001-8553-6493} \and \\ Jiaqi Wang\inst{3}\orcidID{0000-0002-5045-1497} \and Jiangbin Zheng\inst{1,2,3}\orcidID{0000-0003-3305-0103}  \and
Stan Z. Li\thanks{Corresponding Author}\inst{1,3}\orcidID{0000-0002-2961-8096}}
\authorrunning{Liu et al.}
%
\institute{AI Lab, Research Center for Industries of the Future, Westlake University, Hangzhou 310030, China \and
School of Computer Science And Technology, Zhejiang University, Hangzhou 310024, China \and School of Engineering, Westlake University, Hangzhou, China\\
\email{\{liuzihan,wangge,wangjiaqi,zhengjiangbin,stan.zq.li\}@westlake.edu.cn}}
\maketitle              
\begin{abstract}
Peptides are formed by the dehydration condensation of multiple amino acids. The primary structure of a peptide can be represented either as an amino acid sequence or as a molecular graph consisting of atoms and chemical bonds. Previous studies have indicated that deep learning routes specific to sequential and graphical peptide forms exhibit comparable performance on downstream tasks. Despite the fact that these models learn representations of the same modality of peptides, we find that they explain their predictions differently. Considering sequential and graphical models as two experts making inferences from different perspectives, we work on fusing ‘expert knowledge’ to enrich the learned representations for improving the discriminative performance. To achieve this, we propose a peptide co-modeling method, RepCon, which employs a contrastive learning-based framework to enhance the mutual information of representations from decoupled sequential and graphical end-to-end models. It considers representations from the sequential encoder and the graphical encoder for the same peptide sample as a positive pair and learns to enhance the consistency of representations between positive sample pairs and to repel representations between negative pairs. Empirical studies of RepCon and other co-modeling methods are conducted on open-source discriminative datasets, including aggregation propensity, retention time, antimicrobial peptide prediction, and family classification from Peptide Database. Our results demonstrate the superiority of the co-modeling approach over independent modeling, as well as the superiority of RepCon over other methods under the co-modeling framework. In addition, the attribution on RepCon further corroborates the validity of the approach at the level of model explanation. Code is available on: \href{https://github.com/Zihan-Liu-00/RepCon}{\textcolor{MyRoyalBlue}{https://github.com/Zihan-Liu-00/RepCon}}
\keywords{Representation learning on peptides \and Primary structure \and Co-modeling \and Predictive model.}
\end{abstract}

\clearpage
\section{Introduction}

Peptides are short chains of amino acids, referred to as mini-proteins, formed through dehydration \cite{langel2009introduction}.
In recent years, peptides have gained significant attention in Artificial Intelligence (AI) research due to their immense potential in various fields such as drug discovery \cite{muttenthaler2021trends}, peptide synthesis \cite{mohapatra2020deep}, and precision medicine \cite{bhinder2021artificial}.
By harnessing the power of AI, researchers can analyze and predict peptide structures \cite{mcdonald2023benchmarking}, interactions \cite{lei2021deep,cunningham2020biophysical}, and properties \cite{batra2022machine,veltri2018deep}.
These advancements have led to groundbreaking advancements in high-throughput screening and drug design and, thus, to revolutionize healthcare.

Peptides have three levels of structure: primary, secondary, and tertiary structure.
Due to their short length, peptides typically have simpler secondary structures \cite{marullo2013peptide,seebach2006helices} and less stable tertiary structures \cite{pace1998helix,mittal2013structural}.
As a result, AI for peptides pays attention to extracting the information from the primary structure. 
Among related works, it can be summarized into three encoding approaches for representing peptides' primary structure digitally as the input of predictive models.
Firstly, a one-dimensional (1D) vector encoding represents amino acids by the concatenating one-hot encodings \cite{armenteros2019detecting,batra2022machine}.
Secondly, a sequence-based encoding considers the positional information of each amino acid in the peptide sequence \cite{charoenkwan2021bert4bitter,chu2022transformer}. 
Thirdly, a graph-based encoding represents atoms as nodes and chemical bonds as edges \cite{yan2023samppred,wei2021atse}.
It is important to note that these three ways of encoding peptide primary structures are strictly convertible, but the choice of encoding affects the construction of the AI model.
Due to the popularity of traditional machine learning methods in peptide research, 1D vector encoding is the most common encoding method.
For the remaining encodings, sequential models, such as Recurrent Neural Networks (RNNs) \cite{elman1990finding} and Transformer \cite{vaswani2017attention}, are applicable for sequence encoding, while graph encoding needs to be modeled by Graph Neural Networks (GNNs) \cite{kipf2016semi}.

A standardized empirical study on these encoding methods has been conducted in the previous work \cite{liu2023assembly}.
The researchers evaluate machine learning backbone models using the mentioned encodings and show that the top-performing models in sequential and graphical encoding groups, i.e., Transformer and GraphSAGE \cite{hamilton2017inductive}, perform comparably well and outperform 1D vector encoding.
Despite the comparable performances, numerous studies on model architectures and mechanisms lead us to explore the explanation of sequential and graphical routes in peptide modeling.
We propose an attribution method for peptides based on integrated gradients \cite{sundararajan2017axiomatic} to explain both sequence- and graph-based models and analyze the distribution of contribution scores at the amino acid level.
Our results show that the models exhibit inconsistent attribution at the amino acid level, suggesting that sequence- and graph-based models have different explanations for their predictions.
A brief overview of related work on attribution techniques and AI for peptides is provided in Appendix \ref{related_work}.

The similar performance of the two models but differences in explanation, as shown in Fig.~\ref{fig1}, suggest that neither the sequential model nor the graphical model is competent enough to capture the representations of specific amino acids from a statistical perspective.
We heuristically argue that collaboration between sequence and graph models can circumvent their respective failed predictions on certain samples.
Considering the explanation from either modeling route as expertise, we, therefore, attempt to fuse learned representations from both sequence- and graph-based models to enrich latent information for downstream tasks.
To achieve this, we propose a sequential and graphical co-modeling framework consisting of a sequential encoder, a graphical encoder, a fusion module, and a downstream predictor for learning representations of peptide primary structure.
After evaluating general fusion approaches, such as concatenation, weighted sum, and Compact Bilinear Pooling (CBP) \cite{gao2016compact}, we propose Regularization by Representation Contrasting (RepCon) and name our preferred implementation of the co-modeling framework after it.
RepCon is built upon the contrastive learning algorithm, which enhances the mutual information of sequential and graphical representations (i.e., the encoder outputs) for fusing information and as a regularization term to facilitate model training.
Compared to other fusion approaches, RepCon has the following merits.
RepCon enhances learning consistent representations for peptides across different encoders. 
The expected consistency stems intuitively from the fact that each encoder aims to extract task-related information from the same peptide.
For practicality, RepCon inactivates its graph-related part to improve inference efficiency by saving data processing and model propagation time.

In summary, the primary contributions of this paper include: 
(1) We demonstrate the differences in explanation between peptide sequence-based modeling and graph-based modeling by attribution technique. To the best of our knowledge, this is the first study to explain and then compare the explanation differences between peptide deep learning routes.
(2) We pioneer the concept of co-modeling representation learning of peptide primary structures. We explore several potential modes of co-modeling and select RepCon based on contrastive learning as the preferred option. We provide theoretical support for RepCon and detail its fair reasoning efficiency.
(3) We conduct experiments to evaluate RepCon on four benchmark datasets, including classification and regression tasks. RepCon consistently outperforms co-modeling baselines that differ from the fusion module. 
Furthermore, the improvement in the similarity between RepCon's attribution and the attribution of the original sequential and graphical backbones validates the effectiveness of our method.

\section{Attribution on Peptide}
\label{Attribution}

\begin{figure}[t] 
    \centering
    \subfloat{\includegraphics[width=0.49\linewidth]{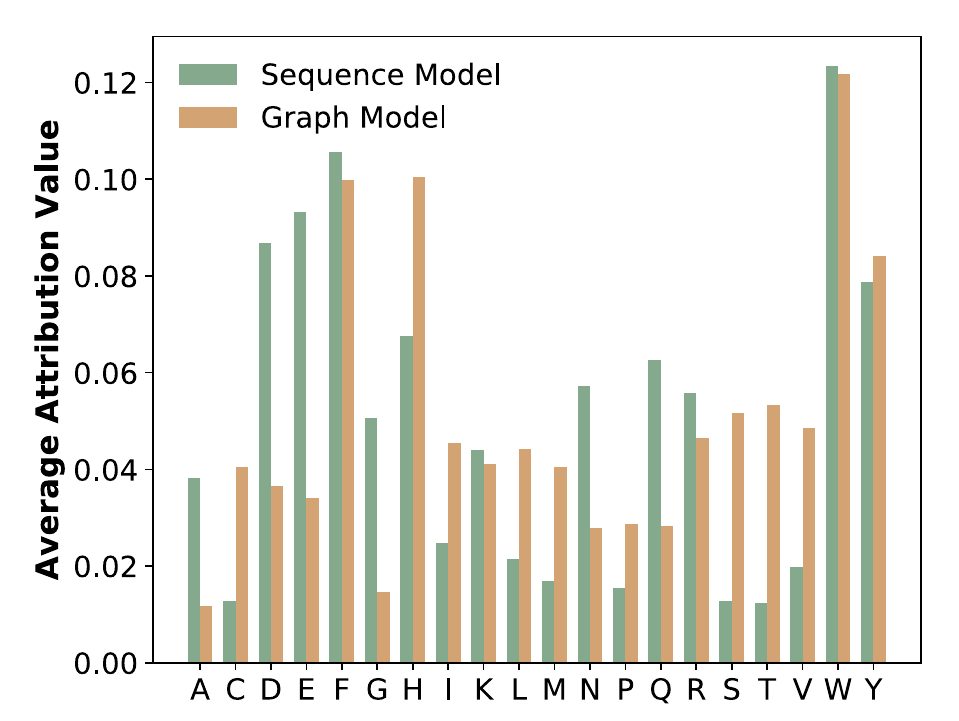}}
    \subfloat{\includegraphics[width=0.49\linewidth]{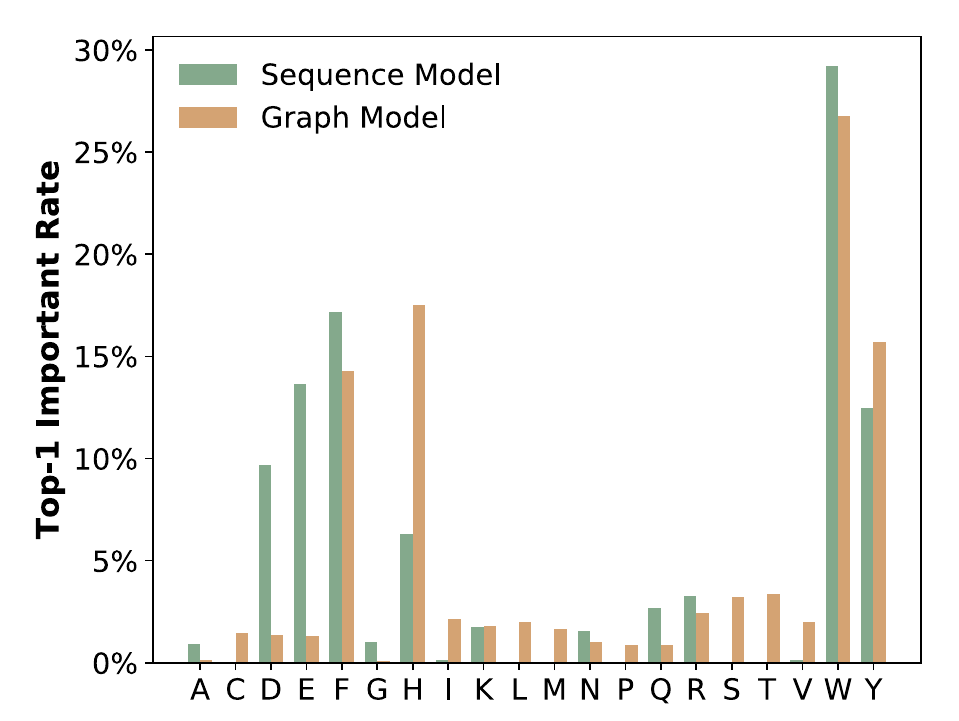}}
    \caption{The amino acid-wise attribution of the sequence model Transformer and the graph model GraphSAGE is evaluated on the test set of the AP regression task. The left histogram shows the average attribution value for each amino acid, while the right histogram displays the proportion of each amino acid identified as the most important component in the test peptides. 
    }
    \label{fig1}
\end{figure}

Previous empirical study \cite{liu2023assembly} suggests that the sequence route with the model architecture of Transformer \cite{vaswani2017attention} and the graph route with that of GraphSAGE \cite{hamilton2017inductive} achieve comparable performance on predicting aggregation propensity (AP).
\textit{However, it remains unclear whether the similar performance of these models with different mechanisms implies similar explanations for their predictions.}

In order to address this question, we develop an attribution method to elucidate the functioning of deep learning models for peptides. This method is based on the integrated gradient \cite{sundararajan2017axiomatic}. The details of attribution methods for both sequence and graph peptide models are provided in Appendix \ref{appen1_1}.
We subsequently verify the plausibility of the attribution results in Appendix \ref{appen1_2}.
We apply attribution to both the sequence and graph models for AP prediction, aiming to gain insights by examining the similarities and differences in the attribution results.
Fig.~\ref{fig1} presents the statistical results of amino acid-wise attribution for both models.
The left panel displays the average attribution values, while the right panel illustrates the proportion of each amino acid attributed as the most significant element in the peptides. 
See Appendix \ref{amono_acids} for the used abbreviations of amino acids.

Notably, distinct patterns emerge from the significant attribution differences observed among amino acids under the two different models.
The amino acids significantly attributed to the sequence model can be categorized into three subgroups. Amino acids A and G share the characteristic of having the simplest side chain, resulting in most graph nodes belonging to the backbone, which is a fundamental component of any amino acid. Amino acids D and E have negative charges. Amino acids Q and N have side chains that exhibit both polar and hydrophilic properties.
Similarly, there are commonalities among amino acids significantly attributed to the graph model. Amino acids C and M have side chains containing sulfur atoms. Amino acid C has a thiol group (-SH), while amino acid M contains a thioether group (-S-CH3). Amino acids S and T share the characteristic of having a hydroxyl group (-OH) in their side chains, making them polar and hydrophilic. On the other hand, amino acids V, L, and I consist solely of carbon chains.

In Appendix \ref{appen7}, Fig.~\ref{fig_sup1} presents a comparison of the Mean Absolute Error (MAE) at the amino acid level in the predictions of the Transformer and GraphSAGE models.
As can be observed in the figure, the differences in amino acid-level performance between the models are consistent with the differences in amino acid attribution. Building upon the aforementioned discussion, the inconsistency in attribution between the models translates into divergent explanations for their predictions, consequently impacting their predictive performance. This insight sparks the idea of integrating sequence and graph models to leverage the strengths of both approaches in representation learning, thereby providing more comprehensive information for downstream tasks.

\section{Methodology}

To further exploit the diverse explanations provided by sequence- and graph-based models, we develop a co-modeling framework that combines the information obtained from the representations learned by both models.
As illustrated in Fig.~\ref{Fig2}, the proposed co-modeling framework comprises a sequential encoder and a graphical encoder, independently extracting representations from amino acid sequences and molecular graphs, respectively. It also includes a fusion module for integrating the representations from both encoders and a downstream predictor.

While focusing on the fusion module, general methods, such as concatenation, weighted sum, and CBP, suffer from several drawbacks: 
(1) The lack of interaction between representations.
(2) The absence of additional losses beyond the monitoring signal to optimize the encoders.
(3) The training speed varies between encoder architectures, which requires careful tuning to ensure co-activation with the shared predictor.
(4) The involvement of the entire co-modeling framework during inference reduces efficiency, including the cost of graph data preparation.

\begin{figure}[t]
    \centering
    \includegraphics[width=\hsize]{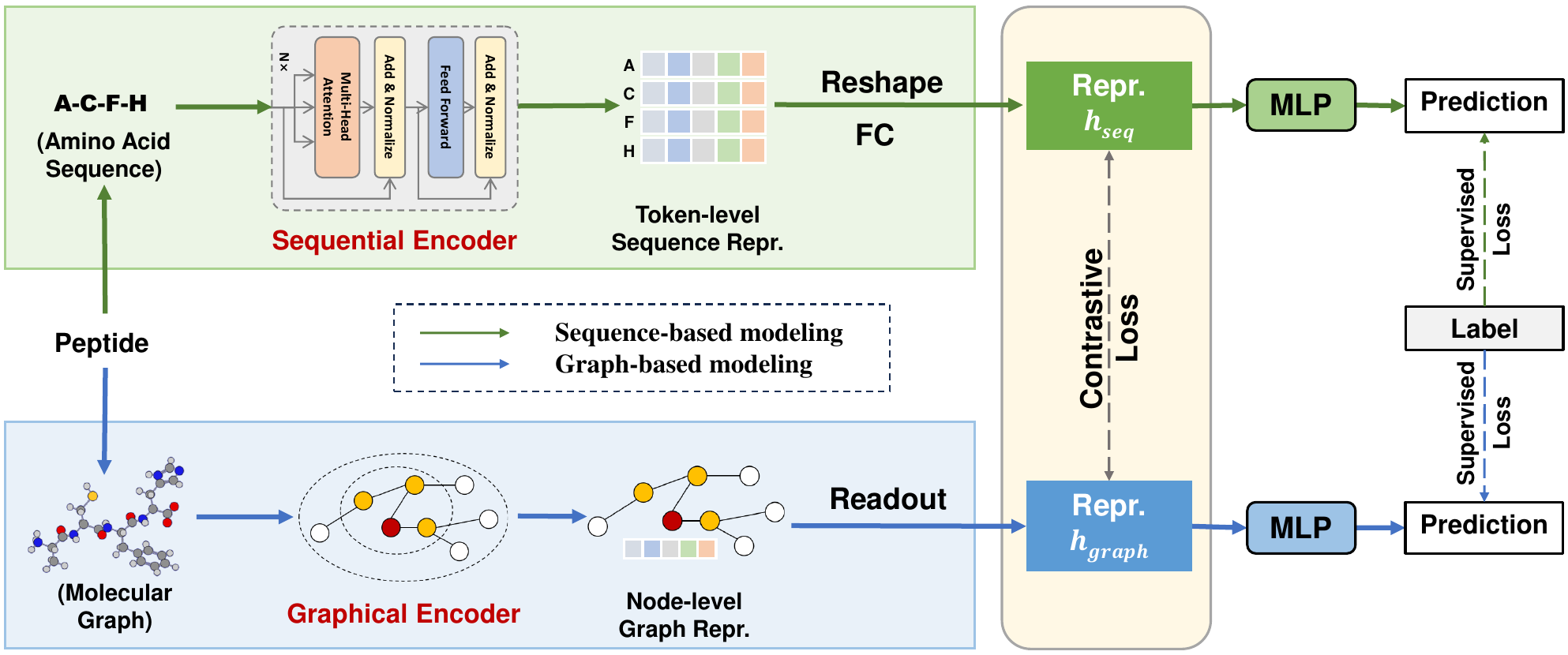}
    \caption{Illustration of the proposed RepCon in the peptide co-modeling framework during the training phase. The sequence-based and graph-based modeling routes are represented by green and blue, respectively. The consistency between the sequential and graphical encoder representations is learned through contrastive loss without supervision. Model predictions are supervised by task-relevant labels. The graph-based modeling part is deactivated during inference.}
    \label{Fig2}
\end{figure}

To address the challenges associated with fusing representations, we propose RepCon (Regularization by Representation Contrasting) as the fusion method in our co-modeling framework.
In RepCon, both the sequence encoder and the graph encoder have their own predictors (independent MLPs as shown in Figure \ref{Fig2}), which mitigates the issues arising from the differences in training speed caused by the encoder architectures.
Inspired by the contrastive learning algorithm, we incorporate the consistency of the representations from both encoders as a regularization term to facilitate information fusion.
The amino acid sequence of a peptide and its molecular graph both describe the primary structure information, and they have a bijective relationship.
Therefore, taking the representations extracted from the sequence and the graph encoding of a peptide as a positive pair, contrastive learning is employed to enhance the mutual information of the representations as an additional unsupervised signal.

Given an input peptide $(x_{seq}, x_{graph}, y)$, where $x_{seq}$ represents the amino acid sequence, $x_{graph}$ represents the molecular graph, and $y$ represents the label, we extract representations using the sequence encoder $f_{e}(\cdot)$ and the graph encoder $g_{e}(\cdot)$, denoted as $h_{seq}=f_{e}(x_{seq})$ and $h_{graph}=g_{e}(x_{graph})$, respectively.
The corresponding predictors for the encoders are denoted as $f_{p}(h_{seq})$ and $g_{p}(h_{graph})$.
Each encoder and its corresponding predictor form a complete end-to-end network.
To ensure the predictive performance for each peptide, we minimize the prediction losses associated with downstream tasks under supervision:
\begin{equation}\label{L_pred}
    \mathcal{L}_{pred} = \mathbb{E}_{(x,y)\sim D^{tr}}[\mathcal{l}(f_p(f_e(x_{seq})),y)+\mathcal{l}(g_p(g_e(x_{graph})),y)],
\end{equation}
where $D^{tr}$ represents the training data, and $\mathcal{l}(\cdot)$ represents the supervised loss function.
It is a Mean Square Error (MSE) loss for regression tasks and a cross-entropy loss for classification tasks.
To enhance the mutual information between the representations of the peptides extracted by the encoders, we employ the InfoNCE loss \cite{he2020momentum} to capture the correlation between the representations using a contrastive learning framework. The theoretical effectiveness of applying InfoNCE is demonstrated in Theorem \ref{theo1}. 

\begin{theorem} \label{theo1}
Treat the sequence data $x_{seq}$ and the graph data $x_{graph}$ as two distinct observations of a type of peptide. Fusing the learned representations $h_{seq}$ and $h_{graph}$ from $x_{seq}$ and $x_{graph}$, incorporating the InfoNCE contrastive loss, can improve the discriminative power of the model (Proof in Appendix \ref{proof_theo1}).
\end{theorem}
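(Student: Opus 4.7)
The plan is to proceed in three stages. First, I would show that the InfoNCE regularizer used in RepCon is a variational lower bound on the mutual information $I(h_{seq};h_{graph})$ between the two encoder outputs. Invoking the standard derivation of van den Oord et al., given a minibatch of $N$ peptides with a positive pair $(h_{seq}^{i},h_{graph}^{i})$ and $N-1$ cross-sample negatives, one obtains $I(h_{seq};h_{graph}) \geq \log N - \mathcal{L}_{\mathrm{InfoNCE}}$, so minimizing the contrastive term tightens a lower bound on the shared information between the sequential and graphical representations.

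Second, I would introduce a multi-view sufficiency hypothesis. Because the paper emphasizes that $x_{seq}$ and $x_{graph}$ are strictly interconvertible descriptions of the same primary structure and therefore stand in bijective correspondence, the label $Y$ satisfies $Y \perp x_{graph} \mid x_{seq}$ and its symmetric counterpart, so each view individually carries all task-relevant information. Under this redundancy assumption I would decompose $I(Y;h_{seq},h_{graph})$ via the chain rule and bound the task-irrelevant component retained by each encoder in terms of the view-specific conditional information $I(h_{seq};x_{seq}\mid h_{graph})$ and its symmetric partner. Raising $I(h_{seq};h_{graph})$ then concentrates the encoders on the shared, task-relevant content while shedding nuisance information idiosyncratic to a single view, in the spirit of standard multi-view contrastive analyses.

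Third, I would convert the gain in mutual information into a statement about discriminative power. For classification, a Fano-type inequality upper-bounds the Bayes error by a monotone function of $H(Y \mid h_{seq},h_{graph}) = H(Y) - I(Y;h_{seq},h_{graph})$; for regression, an analogous entropy-power bound governs the achievable MSE. Combined with the data-processing observation that the fused pair $(h_{seq},h_{graph})$ can carry no less label information than either view alone, the InfoNCE-regularized objective tightens the achievable loss bound relative both to single-view training and to naive, unregularized fusion, which establishes the claim.

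The main obstacle will be the transition from the information-theoretic bound to the discriminative performance of an actual trained predictor. The InfoNCE lower bound is tight only up to the $\log N$ constant and requires sufficiently large batches and expressive critics to be meaningful; more importantly, the implication ``more shared information implies better accuracy'' hinges on the multi-view sufficiency hypothesis holding at least approximately. I would therefore state this assumption explicitly, note that bijectivity of the two peptide encodings makes it particularly plausible in the present setting, and indicate how the guarantee degrades gracefully when sufficiency holds only up to some residual $\epsilon$.
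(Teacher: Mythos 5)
Your proposal is a coherent argument, but it takes a genuinely different route from the paper's proof of Theorem~\ref{theo1}. The paper does not go through mutual information at all for this theorem: it works directly with the contrastive objective, rewriting $\mathcal{L}_{con}$ as an expectation of a convex logistic-type surrogate $l\left(h^{T}(h^{+}-h_i^{-})\right)$ over sampled labels, and then applies Jensen's inequality to pull the expectation over $x^{+}\sim D_c$ and $x_i^{-}\sim D_{c_i^{-}}$ inside $l$. This replaces $h^{+}$ and $h_i^{-}$ by class-mean embeddings $\mu_{c}$, so that $\mathcal{L}_{con}$ upper-bounds (up to a class-collision factor $\lambda=\mathbb{P}(c'\neq c\mid c)$) the supervised loss of the explicit ``mean classifier'' $W^{\mu}$ whose $c$-th row is $\mu_c=\mathbb{E}_{x\sim D_c}[h]$. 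Discriminative power is thus tied to a concrete linear predictor rather than to the Bayes error. Your route --- the van den Oord bound $I(h_{seq};h_{graph})\geq \log N-\mathcal{L}_{\mathrm{InfoNCE}}$, a multi-view sufficiency hypothesis, and a Fano-type conversion to error probability --- is the standard multi-view contrastive analysis; it is more general in that it does not commit to a particular classifier, but it is also looser: it needs the sufficiency assumption to link shared information to label information, and the MI bound is vacuous for small batches. Note also that your first stage essentially reproduces what the paper proves separately as Theorem~\ref{theo2} in Appendix~\ref{proof_theo2} (that minimizing $\mathcal{L}_{con}$ increases $I(h;h^{+})$), so your argument overlaps with the paper's overall theoretical development but not with its proof of this particular statement. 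Both arguments are at a comparable level of informality (the paper approximates the learned classifier $W$ by $W^{\mu}$ without justification), so neither is strictly more rigorous; the paper's is more operational, yours is more information-theoretically transparent but leans on an assumption the paper never needs to invoke.
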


A positive sample pair consists of the sequence representation $h_{seq}$ and the graph representation $h_{graph}$ of a specific peptide type.
In contrast, a negative sample pair consists of representations from either the sequential or the graphical encoder but from different peptides.
For the $i$-th peptide, the formulation of its contrastive loss is as follows:
{
\begin{equation}\label{L_infonce}
    \mathcal{L}_{con}= \mathbb{E}_{(x,y)\sim D^{tr}}[-log\frac{\exp({f_e(x_{seq})^T\cdot g_e(x_{graph})}/\tau)}{\sum_{j=0}^K \exp({f_e(x_{seq})^T\cdot h^-}/\tau)+\exp({g_e(x_{graph})^T\cdot h^-}/\tau)}],
\end{equation}
}
where $K$ is the number of negative samples, $h^-\in \{f_e(x_{seq,j}),g_e(x_{gra,j})\}$ denotes a representation from either the sequential or graphical encoder from another peptide, and $\tau$ represents a temperature hyper-parameter \cite{wu2018unsupervised}.
The improvement from the mutual information perspective by minimizing $\mathcal{L}_{con}$ is theoretically analyzed in Appendix \ref{proof_theo2}.
Since the sequential and graphical representations are essentially extracted from the same peptide, the encoders have consistent optimization goals in extracting task-relevant information.
In addition, the loss targeting consistency also serves as an unsupervised regularization term, preventing overfitting, improving generalization, and controlling model complexity.

To incorporate the contrastive loss into the training process, we add it to the supervised loss term in Eq.~\ref{L_pred} and obtain the final loss as follows:
\begin{equation} \label{L_train}
\mathcal{L}_{train}=\mathcal{L}_{pred}+\lambda \mathcal{L}_{con},
\end{equation}
where $\lambda$ is a hyper-parameter to balance these two loss terms.
In the inference phase, the model activates only the sequential encoder and its predictor, resulting in the following prediction:
\begin{equation} \label{inference}
    y'=f_p(f_e(x_{seq})).
\end{equation}
We activate the sequential end-to-end network instead of the graphical part for the following reasons:
On the one hand, the sequential encoder has a larger number of parameters, performs better, and empirically exhibits more stable training.
On the other hand, the sequential encoder directly processes the raw FASTA format.
The pseudocodes for the training and inference phases of RepCon are detailed in Alg. \ref{alg1} in Appendix \ref{appen3}.

\section{Experiments} \label{exp}

In this section, we conduct experiments to validate and understand the effectiveness of RepCon. We describe the experimental scenarios, including the test datasets, baselines, and experimental setup. Next, we present RepCon's performance on classification and regression datasets, computational efficiency, and attribution. In particular, we highlight experiments or discussions related to the following questions:
\textbf{Q1}: How to implement RepCon, including the rationale behind the choice of Transformer and GraphSAGE as backbones and the selection of hyperparameters?
\textbf{Q2}: How to implement baseline fusion methods for co-modeling?
\textbf{Q3}: What experiments demonstrate the performance of RepCon, both during the training and testing phases?
\textbf{Q4}: What insights can be gained from comparing RepCon with its backbones from an explanatory perspective?

\subsection{Experimental Scenarios}

\noindent \textbf{Datasets} The regression task dataset includes the predictions of aggregation propensity (AP) \cite{liu2023assembly} and retention time (RT) \cite{wen2020cancer}.
The classification task dataset includes the prediction of antimicrobial peptides (AMP) \cite{bhadra2018ampep} and the peptide family of PeptideDB (PepDB) \cite{liu2008construction}.
To ensure consistency, we further filtered the samples to include only 20 natural amino acids and a maximum peptide length of 50.
Statistics of the datasets are provided in Appendix \ref{appen4}.
The dataset AP has already been split into train/validation/test sets in the previous work \cite{liu2023assembly}.
For the dataset RT, \cite{wen2020cancer} have already split it into train and test sets. We further split the train set into train and validation sets at a ratio of 9:1.
The datasets PepDB and AMP were not provided with any predefined splits. Therefore, we divide them into train, validation, and test sets at a ratio of 8:1:1.

\noindent \textbf{Baselines} 
We have chosen the models presented in \cite{liu2023assembly} as our baselines for benchmarking. This decision was made because methods from other dataset resources incorporate external biological information, which would not allow for fair comparisons.
Our main focus is to compare the fusion performance of RepCon with other co-modeling methods utilizing Transformer and GraphSAGE encoders.
The co-modeling baselines we consider include Weighted Sum (WS), Concatenation (Concat), Cross Attention (CA), and Compact Bilinear Pooling (CBP). Additional details regarding these co-modeling baselines can be found in Appendix \ref{appen5} (Q2).
Among the sequence-based and graph-based baselines, Transformer and GraphSAGE have demonstrated superior performance. Therefore, we have chosen them as the backbone encoders for our co-modeling methods. The evaluations supporting this choice are provided in Appendix \ref{appen6} (Q1).

\noindent \textbf{Experimental Setup (Q1, Q2)} 
The models' hidden layer dimensions are set to 64. 
Both the predictor part in RepCon and the baseline/backbone models consist of fully connected layers and LeakyReLU activation.
The selection of the hyperparameter $\lambda$, which is used to balance the two loss terms of RepCon, is described in detail in Appendix \ref{appen9}.
The experiments are conducted in a running environment with CUDA 11.7, Python 3.7.4, and PyTorch 1.13.1. 
The graph peptides are coarse-grained following Martini 2 \cite{monticelli2008martini}. The implementation has been described in \cite{liu2023assembly}.
The graph convolutional layers are implemented via DGL 1.1.1 \cite{wang2019dgl}.
The metrics used to evaluate the performance of the benchmark model in the regression task are MAE, MSE, and R-squared (R$^2$). For the classification task, the metric used is Accuracy.

\subsection{Effectiveness on Regression Tasks (Q4)}

\begin{table}[t]
\centering
\caption{Experimental results of RepCon and the baseline co-modeling methods are presented for the regression tasks on datasets AP and RT. The evaluation metrics used for the regression task include MSE, MAE, and R$^2$. The best-performing method is indicated in bold. The necessary components of the model during the inference phase are checked.}
\scalebox{0.93}{
\begin{tabular}{llcccccccc}
\hline
\multirow{2}{*}{\textbf{Framework}} & \multirow{2}{*}{\textbf{Method}} & \multicolumn{3}{c}{\textbf{AP}} & \multicolumn{3}{c}{\textbf{RT}} & \multicolumn{2}{c}{\textbf{Inference}} \\ \cline{3-10}
                      &                         & \textbf{MAE}    & \textbf{MSE}    & \textbf{R$^2$}   & \textbf{MAE}    & \textbf{MSE}    & \textbf{R$^2$} & \textbf{Seq} & \textbf{Graph}  \\ \hline
Backbone              & Transformer             & 3.81E-2       & 2.33E-3       & 0.947    & 1.57       & 5.02       & 0.991    & $\checkmark$ & \\
Backbone              & GraphSAGE              & 3.89E-2       & 2.44E-3       & 0.945       & 2.57       & 12.9       & 0.977    & & $\checkmark$ \\ \hline
Co-modeling           & WS            & 4.05E-2       & 2.68E-3       & 0.940     & 1.92       & 7.75       & 0.986    & $\checkmark$ & $\checkmark$ \\
Co-modeling           & Concat                  & 3.75E-2       & 2.27E-3       & 0.949     & 1.45       & 4.67       & 0.992    & $\checkmark$ & $\checkmark$  \\
Co-modeling           & CA         & 3.79E-2       & 2.32E-3       & 0.948     & 1.44       & 4.52       & 0.992    & $\checkmark$ & $\checkmark$ \\
Co-modeling           & CBP & 3.76E-2       & 2.31E-3       & 0.948     & 1.48       & 4.82       & 0.992    & $\checkmark$ & $\checkmark$ \\
Co-modeling           & RepCon                 & \textbf{3.62E-2}       & \textbf{2.12E-3}       & \textbf{0.953}     & \textbf{1.41}       & \textbf{4.40}       & \textbf{0.993}    & $\checkmark$ &  \\ \hline
\end{tabular}}
\label{tab1}
\end{table}

In this section, we evaluate each method on the regression datasets. The experimental results are presented in Table~\ref{tab1}.
Each model in the co-modeling framework contains a Transformer-based sequential encoder, a GraphSAGE-based graphical encoder, an MLP predictor, and their respective fusion methods.
The backbones, Transformer and GraphSAGE, have competitive performance on the dataset AP, but Transformer outperforms GraphSAGE on the dataset RT.
The difference in performance on the dataset RT comes down to the fact that graph convolution has fewer learnable parameters and its difficulty in capturing long-term relationships between amino acids.

In the co-modeling group, WS is the only method that does not outperform backbones.
Among other methods, RepCon outperforms baselines and backbones by a wide margin.
Concat, as a parameter-free method, ranks second after RepCon.
CA and CBP, as methods with interactions in their representations and containing learnable parameters, fail to show the expected performance.
This could be due to more parameters leading to difficulties in training with insufficient samples.
Among the methods in the co-modeling framework, RepCon introduces additional unsupervised signals to further optimize the encoder, whereas the other methods only work to provide richer information to the predictor, which explains the reason for RepCon's superior performance.

The activated components, i.e., sequence- and graph-based parts, during inference, are shown in Table~\ref{tab1}. In the proposed co-modeling framework, the main factors affecting computational efficiency include the following aspects.
Unlike other co-modeling methods, RepCon activates the sequence encoder module only during the inference phase.
This saves time in preparing peptides in molecular graphs and improves propagation speed by downsizing the model.
Thus, RepCon's decoupled design makes it more efficient.

\subsection{Effectiveness on Classification Tasks (Q3)}

\begin{figure}[t]
    \centering
    \includegraphics[width=\hsize]{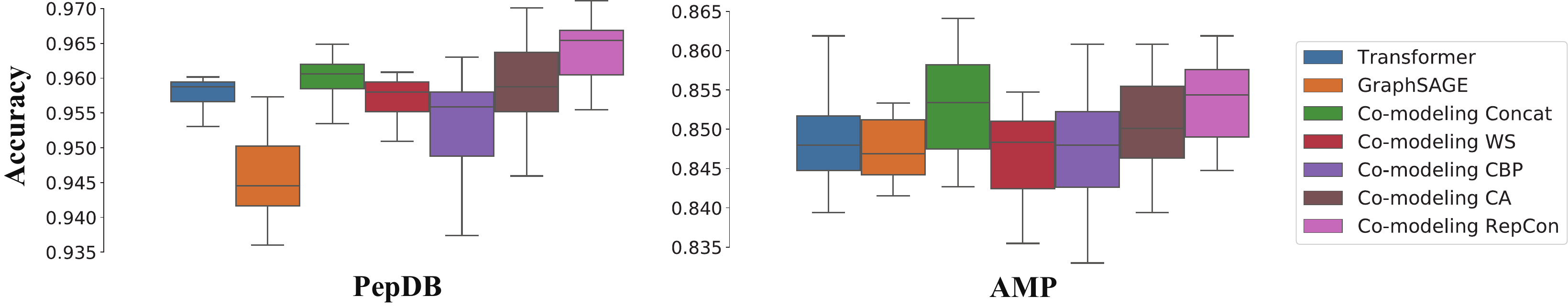}
    \caption{Experimental results on the accuracy of RepCon and baseline co-modeling methods tested on classification tasks, datasets PepDB and AMP, shown in boxplot. The boxes show the medians and the upper and lower quartiles of the results. The lines outside the boxes indicate the maximum and minimum values of the result distributions.}
    \label{Fig3}
\end{figure}

In this section, we compare the performance of co-modeling methods with backbones on the classification datasets PepDB and AMP.
We perform repeated experiments for each method and statistically tabulate the results in box plots, as displayed in Fig.~\ref{Fig3}.
Backbones perform competitively on the AMP dataset, while Transformer performs better than GraphSAGE on the dataset PepDB.
Comparing the median result among the co-modeling methods, we observe that RepCon outperforms the other methods on the PepDB dataset and slightly outperforms the second-best method, Concat, on the AMP dataset.
Among the other methods, both Concat and CA have reasonable classification accuracy, while WS and CBP perform relatively poorly.

Compared to the performance of models in classification tasks, RepCon, CA, and Concat show steady performance gains based on backbones.
WS does not demonstrate effective aggregation of information from the results due to the unlearnable merging of dimensions.
CBP, despite its solid performance on the regression tasks, does not show consistent performance gains on the classification task.
In addition, if we compare the worst case from a model's experiments (indicated by the horizontal line below), RepCon still outperforms other methods and backbones.
This means that RepCon's performance may be less affected by the randomness in initialization and training.
Of the other methods, CBP and CA have more erratic performance due to the parameter complexity of their modules.
To summarize the experimental results on classification tasks, RepCon is the best-performing model among all methods under co-modeling frameworks.

\subsection{Analysis on Attribution of RepCon (Q4)}

\begin{table}[t]
\centering
\setlength\tabcolsep{5pt}
\caption{Metrics of attribution similarities between models, where T, G, and R stand for Transformer, GraphSAGE, and RepCon. Top-$i$ overlap represents the probability that the $i$ amino acids with the top attribution values have overlap. Kendall's Tau and Spearman's Footrule evaluate the correlation between rankings; JS Divergence and Cosine Similarity evaluate the similarity between value distributions. An upward arrow indicates a higher correlation as the value increases and vice versa. Results are expressed as mean $\pm$ variance.}
\label{tab2}
\begin{tabular}{llccc}
\hline
\textbf{Evaluating}         & \textbf{Metric}               & \textbf{T vs. G}          & \textbf{G vs. R}          & \textbf{T vs. R}          \\ \hline
Order        & Kendall’s Tau↑       & 0.165$\pm$0.301 & 0.183$\pm$0.313 & 0.795$\pm$0.154 \\
Order        & Spearman's Footrule↑ & 0.227$\pm$0.379 & 0.258$\pm$0.340 & 0.886$\pm$0.115 \\
Order        & Top-1 Overlap↑       & 43.5\%       & 46.9\%       & 87.2\%       \\
Order        & Top-2 Overlap↑       & 81.5\%       & 85.3\%       & 99.7\%       \\
Distribution & JS Divergence↓       & 0.366$\pm$0.107 & 0.082$\pm$0.050 & 0.011$\pm$0.009 \\
Distribution & Cosine Similarity↑   & 0.782$\pm$0.140 & 0.817$\pm$0.125 & 0.984$\pm$0.014 \\ \hline
\end{tabular}
\label{Tab2}
\end{table}

In this section, we evaluate how RepCon fuses the knowledge of sequential and graphical encoders by measuring the attribution relevance of RepCon and backbones.
Stronger attribution relevance means that the models explain the predictions more closely.
We construct the metrics in terms of the order of importance and the distribution of the attribution values across amino acids in each peptide.
The former includes the correlation algorithms Kendall's Tau \cite{kendall1938new} and Spearman's Footrule \cite{spearman1904general}.
The latter includes Jensen-Shannon (JS) divergence \cite{lin1991divergence} and Cos similarity.
Furthermore, we introduce the more intuitive Top-$i$ Overlap.
Taking a short peptide sequence FLER as an example, if the attribution shows that model A considers F(1) and E(3) are two important amino acids, while model B considers F(1) and R(4) are two important amino acids, then we define the attributions are Top-2 Overlapped as they share an F(1).

In Table~\ref{Tab2}, we show the comparison of the attribution differences between the models under each of the metrics.
The involved models are Transformer (T), GraphSAGE (G), and RepCon (R).
It is worth noting that T and G are not trained under RepCon's framework but are separately trained backbones.
The \textbf{T vs. G} group is the control group, which shows the similarity in attribution results between backbone models.
In both terms of order and distribution, the results from \textbf{G vs. R} group illustrate that RepCon and GraphSAGE are closer in their attributions compared to T vs. G. 
This demonstrates that fusing representations in the way of RepCon's mutual information optimization allows the explanations of RepCon to be closer to how graphical models work.
Comparing the group \textbf{T vs. R} and \textbf{T vs. G}, two important phenomena can be observed.
One is that Transformer and RepCon have higher similarity relative to GraphSAGE, which is interpreted as a result of the architectural consistency of the models.
On the other hand, despite the architectural consistency, we are able to observe significant dissimilarities between Transformer and RepCon on the metrics Kendall's Tau, Spearman's Footrule and Top-1 Overlap.
We interpret these discrepancies as the model further extracting and refining the representations learned by sequences and graph encoders during the contrastive learning algorithm.
Furthermore, RepCon's superior performance relative to the backbones supports RepCon's explanations that its predictions are closer to the laws of nature inherent in the downstreamtask compared with either backbone.
In Appendix \ref{appen7}, we provide amino acid-wise attribution and MAE to demonstrate that RepCon's tendency to shift the importance of each amino acid is beneficial for discrimination rather than a trade-off.

\section{Conclusion}
In this paper, we employ an integrated gradient-based attribution method for peptides and discover the inconsistency of task-related explanations between the sequential and graphical routes of machine learning.
Considering both routes as distinct sources of expert knowledge, we propose a co-modeling framework to improve the extraction of peptides' primary structure information. We focus on designing effective fusion methods to combine representations from sequential and graphical encoders. Among all proposed methods, RepCon outperforms others in terms of both theoretical evidence and inference efficiency and is our preferred fusing implementation for the co-modeling framework.
Through empirical studies, we further demonstrate RepCon's superior performance on various discriminative tasks, validating its effectiveness in representation learning. 
Additionally, we conduct attribution evaluations on RepCon, further supporting our theoretical expectations for RepCon.

\clearpage
\appendix

\section{Related Work} \label{related_work}

\textbf{AI for Peptide} has facilitated the prediction of the peptide's properties, interactions, and discovery.
Early methods \cite{hawkins2006detecting,zhang2007prediction} typically embedded the peptide into a 1D vector, where every 20 dimensions represent one amino acid. 
This encoding is usually suitable for traditional models such as SVM \cite{hearst1998support}, RF \cite{breiman2001random}, etc.
Due to the effectiveness and interpretability of this technical route, it is still widely used in recent research \cite{batra2022machine,mongia2023adenpredictor}.
With the success of deep learning in natural language processing and graph data mining, peptides are encoded into sequences and graphs to match new AI techniques. 
Peptides are encoded into feature sequences after one-hot, Position-Specific Scoring Matrix (PSSM) or word embedding and modeled using LSTM, TextCNN, Transformer, etc. \cite{muller2018recurrent,cheng2021bertmhc,chu2022transformer}.
Wei et al. \cite{wei2021atse} propose extracting the PSSM and chemical information with CNN+LSTM and GNN and utilizing them in prediction.
Our work differs from the previous studies in that they aim at fusing multimodal information, whereas ours focuses on combining various routes of the primary structure of peptides.

\textbf{Attribution} aims to explain the inference between inputs and outputs of nonlinear black-box deep learning models \cite{linardatos2020explainable}.
It derives from the concept of
saliency \cite{itti1998model}, which depicts visually informative pixels in an image.
Simonyan et al. \cite{simonyan2013deep} proposed the first gradient-based attribution method, where each gradient quantifies how much a change in each input dimension affects the prediction of the neighborhood around the input.
In this paper, we introduce the integrated gradients \cite{sundararajan2017axiomatic} as the basic algorithm to evaluate the attribution of the sequence and graph models.
This approach measures the importance of the input dimensions directly based on the models' parameters as well as the output.
Integrated gradients have also been validated and applied on sequential and graphical neural networks \cite{sikdar2021integrated,wallace2019allennlp,sanchez2020evaluating,mccloskey2019using}.
Given the generality and validity, we adopt the integrated gradient as a metric to analyze the variability of sequential and graphical neural networks used to learn peptide representations.

\section{Attribution on Peptides} \label{appen1}

\subsection{Implementation of Attribution Method} \label{appen1_1}

Integrated gradient \cite{sundararajan2017axiomatic} has been widely recognized as an effective and reliable attribution methods \cite{arrieta2020explainable}. 
It is initially proposed for the attribution of image classification tasks, where the RGB distribution of image pixels is in continuous space. 
Unlike images, the input dimension of either sequence- or graph-based encoding is in a discrete space. 
In our implementation, we derive gradient saliency on the continuous feature space after the word embedding.

In the following, we present the steps of attribution using the sequential model, and subsequently add the differences of the graphical model.
For a seqnential model $f(X)$, where $X=\{x_1,x_2,...,x_n\}$ denotes a peptide consists of $n$ amino acids.
The feature after word embedding layer is denoted as $H=Emb(X)$, where the dimension of $H$ is $n \times d$.
For a discrete data, we aim to derive the gradient saliency from the loss to the embedding feature $H$.
The loss function is related to the downstream task. As an example, the loss function expressions used to derive gradient significance for classification and regression problems are respectively:
\begin{equation}
    \mathcal{L}_{cla}(X) = \mathcal{L}_{CE}(f(X),\hat{y}),
\end{equation}
\begin{equation}
    \mathcal{L}_{reg}(X) = |f(X)|,
\end{equation}
where $\mathcal{L}_{cla}$ for classification targets on preventing the model from predicting for the original class $\hat{y}$, and $\mathcal{L}_{reg}$ for regression simply aims to change the previous prediction.
Regardless of the downstream tasks, the purpose of the loss function is to find the dimensions that are most effective at corrupting the original predictions, which are deemed important to the model's predictions.

After defining the loss function, we derive the gradient saliency on the embedding layer $H$ by backpropagation and transform the gradient on $H$ into the attribution values of amino acids at each position. 
The formula for the integrated gradients is as follows:
\begin{equation} \label{e3}
    InteGrads(H)=H \circ \sum^{m}_{k=1} \frac{\partial \mathcal{L}(\frac{k}{m}H)}{\partial H},
\end{equation}
where $\circ$ denotes the element-wise multiplication, and $m$ is the number of steps in the Riemman approximation of the integral \cite{sundararajan2017axiomatic}.
From the gradient saliency $InteGrads(H)$ for the embedding $H$ of the peptide $X$, the last step is to sum the saliency of continuous features as the attribution of amino acids in discrete space.
For each amino acid $x_i$ in the peptide sequence $X$, the attribution for $x_i$ is denoted as:
\begin{equation} \label{e4}
Attribution(x_i)=\frac{\sum^{d}_{q=1} InteGrads(H)_{i,q}}{\sum^{n}_{p=1} \sum^{d}_{q=1} InteGrads(H)_{p,q}}.
\end{equation}
The attribution values of amino acids are normalized to ensure that the sum of all the attribution values in a peptide is equal to one.

For graph models, the input graph consists of nodes and edges, where nodes represent a bag of atoms and edges represent chemical bonds.
The species of nodes is a discrete variable, so the input $X$ needs to be embedded to the continuous space $H$ as well, in the same way as in the sequence model.
The function of the graph model is $f(X,A)$, where $A$ represents the adjacency matrix of chemical bonds.
The steps for computing the integrated gradient via the loss function are the same.
Note that for the graph model, the weight term $\frac{k}{m}$ in Eq.~\ref{e3} acts on $H$ and not on $A$.
The element $x_i$ Eq.~\ref{e4} represents atoms rather than amino acids.
As each amino acid is composed of several atoms in the molecular graph, we sum the attribution values of the atoms contained in each amino acid as the attribution value of the amino acid and likewise normalize the attribution values of the amino acids in a peptide.

\subsection{The Plausibility of Our Peptide Attribution Method} \label{appen1_2}

To validate the plausibility of our attribution method, we assess the attribution results on the peptide aggregation propensity (AP) dataset \cite{liu2023assembly}, which involves a regression task that utilizes data obtained from molecular dynamics simulations based on the Martini force field \cite{monticelli2008martini}. 
It describes the tendency of peptides to aggregate or not to aggregate due to differences in primary structures.
Simulated data offer several advantages over experimental data, including reduced noise, reproducibility, and a clear understanding of label acquisition principles \cite{nikolenko2021synthetic}. 
As a result, the labels in the AP regression dataset are reliable and explainable, aligning with our requirements for evaluating the attribution results.

Fig.~\ref{fig1} illustrates the statistical analysis of amino acid-wise attribution on the test set samples.
The left histogram presents quantitative statistics on the average attribution values for each amino acid, while the right histogram presents the distribution of the most significant amino acid type in the test set samples.
Based on the findings depicted in Fig.~\ref{fig1}, it is evident that certain amino acids exhibit substantial attribution in both the sequence and graph models.
For example, amino acids F, W, and Y are all aromatic compounds, leading to peptide aggregation through $\pi$-$\pi$ stacking.
On the contrary, amino acids K and R are positively charged, which can incur in attractive or repulsive forces between peptides and, thus, non-aggregation.
It suggests that the relevant amino acids have higher attribution values, and peptides containing these amino acids demonstrate high or low AP values of a peptide when they get a significant attribution value.
We present visualizations of different levels of aggregation attributions of a high AP and a low AP peptide in Appendix \ref{appen2} to facilitate the understanding of this analysis.
Our peptide attribution method provides reliable feedback on the selection of critical components responsible for peptide aggregation.

\section{20 Amino Acid Abbreviations and Their Physicochemical Properties} \label{amono_acids}

\begin{table}[h]
\caption{20 Amino Acid Abbreviations and Their Physicochemical Properties.}
\centering
\begin{tabular}{lllll}
\hline
Name            & Abbr & Abbr & Molecular & Residue    \\ \hline
Alanine         & Ala  & A    & 89.1      & C3H7NO2    \\
Arginine        & Arg  & R    & 174.2     & C6H14N4O2  \\
Asparagine      & Asn  & N    & 132.12    & C4H8N2O3   \\
Aspartic acid   & Asp  & D    & 133.11    & C4H7NO4    \\
Cysteine        & Cys  & C    & 121.16    & C3H7NO2S   \\
Glutamic   acid & Glu  & E    & 147.13    & C5H9NO4    \\
Glutamine       & Gln  & Q    & 146.15    & C5H10N2O3  \\
Glycine         & Gly  & G    & 75.07     & C2H5NO2    \\
Histidine       & His  & H    & 155.16    & C6H9N3O2   \\
Isoleucine      & Ile  & I    & 131.18    & C6H13NO2   \\
Leucine         & Leu  & L    & 131.18    & C6H13NO2   \\
Lysine          & Lys  & K    & 146.19    & C6H14N2O2  \\
Methionine      & Met  & M    & 149.21    & C5H11NO2S  \\
Phenylalanine   & Phe  & F    & 165.19    & C9H11NO2   \\
Proline         & Pro  & P    & 115.13    & C5H9NO2    \\
Serine          & Ser  & S    & 105.09    & C3H7NO3    \\
Threonine       & Thr  & T    & 119.12    & C4H9NO3    \\
Tryptophan      & Trp  & W    & 204.23    & C11H12N2O2 \\
Tyrosine        & Tyr  & Y    & 181.19    & C9H11NO3   \\
Valine          & Val  & V    & 117.15    & C5H11NO2   \\ \hline
\end{tabular}
\label{abbr}
\end{table}

Table \ref{abbr} lists full names, abbreviations, molecular weights, and residue composition of the 20 amino acids involved.

\section{Visualisation with Attribution of an Aggregated/Non-aggregated Peptide Pair Example} \label{appen2}

\begin{figure}[h]
    \centering
    \includegraphics[width=\hsize]{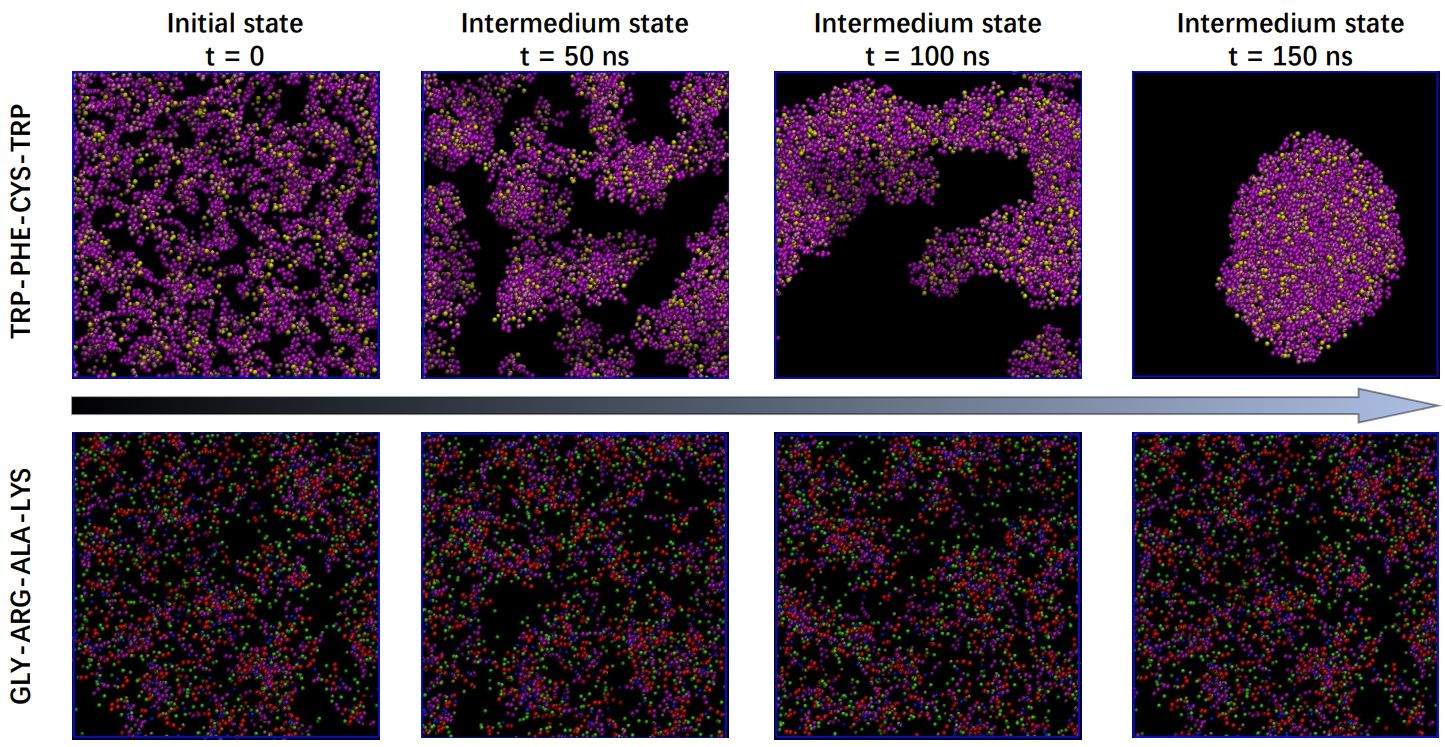}
    \caption{Visualization of the simulation process for an aggregating peptide TRP-PHE-CYS-TRP (WFCW) and a non-aggregating peptide GLY-ARG-ALA-LYS (GRAK) from the previous work \cite{liu2023assembly}. The peptide molecules show a regular motion under the force field, thus macroscopically behaving as aggregated or unaggregated.}
    \label{fig_sup2}
\end{figure}

Taking a pair of peptides with contrasting properties as an example, we conduct attribution on a peptide pair WFCW and GRAK. WFCW exhibits a high AP value, while GRAK has a low AP value. The simulation processes of these two peptides are visualized in Fig.~\ref{fig_sup2}.

Fig.~\ref{fig_sup2} shows the simulated aggregation process of peptides in an aqueous solution, which is exactly the data collection process for the dataset AP. 
The water molecules are covered by black backgrounds and the colored particles are the components of the peptides. The four figures against the top show screenshots of a sample aggregated peptide WFCW between 0 and 150 ns; the four figures against the bottom show screenshots of a sample unaggregated peptide GRAK between 0 and 150 ns.

The attribution results obtained from the sequence model, Transformer \cite{vaswani2017attention}, for the peptide WFCW are as follows: [W:0.28, F:0.28, C:0.07, W:0.37]. This implies that all amino acids except C play a crucial role in the aggregation of the WFCW peptide.
According to expert knowledge \cite{batra2022machine,wang2023transformer}, amino acids W and F are aromatic compounds. Consequently, $\pi$-$\pi$ stacking interactions occur between WFCW molecules, leading to aggregation.
In the case of GRAK, the attribution result is as follows: [G:0.17, R:0.29, A:0.25, K:0.29]. Amino acid K and R are positively charged amino acids, while all other amino acids are electrically neutral. As a result, GRAK molecules repel each other due to electrostatic interactions.

\section{Proof of Theorem 1} \label{proof_theo1}

\begin{proof}
Assume $D_c$ is a distribution of peptides with label $c$ on a given downstream task from which sequence data $x_{seq}$ and graph data $x_{graph}$ with label $c$ can be drawn,
$$
x_{seq} \sim  D_c(X|\tau=sequence);\;\;
x_{graph} \sim  D_c(X|\tau=graph)
$$
$x_{seq}$ and $x_{graph}$ are two observation forms about a certain peptide $X \sim D_c$, where the condition variable $\tau$ serves as a determining factor to classify whether the data is observed as sequence data or graph data. In our method, we consider $x_{seq}$ and $x_{graph}$ above as a positive pair $(x, x^+)$ and we denote $(h, h^+)$ as the corresponding positive pair of learned representations from encoders. We demonstrate the above definition under a classification downstream task pattern. It can be generalized to regression downstream tasks when we treat each regression value as an unique soft label of binary classification. 

In following proof, we aim to construct a function which can describe the discriminative ability of the model, and we will show such function can be bounded by contrastive loss. Specifically, minimizing this function will let model separate representations with different downstream task labels.

Thus we can reformulate and simplify the contrastive loss  $\mathcal{L}_{con}$ (Eq.~\ref{L_infonce})as following,

$$
\begin{aligned}
\mathcal{L}_{con} &= \mathop{\mathbb{E}}\limits_{\tiny{c, \{c_i^-\}_{i=1}^{k} \atop \sim C}}
\mathop{\mathbb{E}}\limits_{\tiny{(x, x^+) \sim D_{c} \atop x_i^- \sim D_{c_i^-}}}
\left[-\mathop{log}\left(\frac{e^{\left(\tiny{h^Th^+}\right)}}{e^{\left(\tiny{h^Th^+}\right)}+\sum_{i=1}^{k}e^{\left(\tiny{h^Th_i^-}\right)}}\right)\right] \\
&=\mathop{\mathbb{E}}\limits_{\tiny{c, \{c_i^-\}_{i=1}^{k} \atop \sim C}} \left[ \mathop{\mathbb{E}}\limits_{\tiny{(x, x^+) \sim D_{c} \atop x_i^- \sim D_{c_i^-}}}
\left[-\mathop{log}\left(\frac{1}{1+\sum_{i=1}^{k}e^{\left(\tiny{h^T \left(h_i^--h^+ \right)}\right)}}\right)\right] \right] \\
&= \mathop{\mathbb{E}}\limits_{\tiny{c, \{c_i^-\}_{i=1}^{k} \atop \sim C}}
\left[\mathop{\mathbb{E}}\limits_{\tiny{(x, x^+) \sim D_{c} \atop x_i^- \sim D_{c_i^-}}}
\left[l \left( h^T(h^+-h_i^-) \right)\right]\right]
\end{aligned}
$$

$c, \{c_i^-\}_{i=1}^{k} \sim C$ is the sampling process from a label distribution $C$. We take $k+1$ labels from this process, and we aim to separate label $c$ from other labels in $\{c_i^-\}_{i=1}^{k}$ to demonstrate discriminative ability of our method. $(x, x^+) \sim D_{c}$ denotes two observation perspectives of a same peptide sampled from $D_c$; $x_i^-\sim D_{c_i^-}$ denotes $x_i^-$ is negative sample of $(x,x^+)$  from another peptide with downstream label $c_i^-$.

 $l$ is a convex function w.r.t $x_i$, which can be expressed as:

$$
l(\{x_i\}_{i=1}^{n}) = -log(\frac{1}{1+\sum_{i=1}^n e^{-x_i}})
$$

According to Jensen’s Inequality, we have:

$$
\begin{aligned}
\mathcal{L}_{con} &= \mathop{\mathbb{E}}\limits_{\tiny{c,\{c_i^-\}_{i=1}^{k} \atop \sim C}}
\left[\mathop{\mathbb{E}}\limits_{\tiny{(x, x^+) \sim D_{c} \atop x_i^- \sim D_{c_i^-}}}
\left[l \left( h^T(h^+-h_i^-) \right)\right]\right] \\
&\geq \mathop{\mathbb{E}}\limits_{\tiny{c, \{c_i^-\}_{i=1}^{k} \atop \sim C,x\sim D_{c}}}
\left[l \left( \mathop{\mathbb{E}}\limits_{\tiny{x^+ \sim D_{c} \atop x_i^- \sim D_{c_i^-}}}\left[h^T(h^+-h_i^-) \right] \right)\right] \\
& = \mathop{\mathbb{E}}\limits_{\tiny{c^+, c^- \atop \sim C}}
\mathop{\mathbb{E}}\limits_{\tiny{x \sim D_{c}}} \left[ l \left(h^T(\mu_{c^+}-\mu_{c^-})\right) \right] \\
&  =
\lambda\mathop{\mathbb{E}}\limits_{\tiny{c, x}} \left[ l \left( \{g(h)_c - g(h)_{c'} \}_{c\neq c'} \right) \right]
\end{aligned}
$$

here $\lambda=\mathbb{P}(c'\neq c|c)$, $g(h)_c =h^T\mu_c = [W^{\mu}h]_c$, $W^{\mu}\in \mathbb{R}^{C \times d}$ whose $c_{th}$ row is the mean of embedding representation with label $c$ (i.e. $[W^{\mu}]_c =\mu_c= \mathbb{E}_{x\sim D_c}[h]$). 
In term of neural networks, we take a fully connected layer with no bias as an example downstream task predictor with input of representation $h\in\mathbb{R}^{d}$. Ideally, $W$ should be optimized by training, while in this case we approximate $W$ as $W^{\mu}$ . By take inner product between $h$ (assume having label $c$) and each row of $W^{\mu}$, the output vector has a larger value in $c^{th}$ row, so it act as classifier $W$. 

Thus, by minimizing contrastive loss $\mathcal{L}_{con}$, the classifier $W^{\mu}$ can better discriminate $x$ with label $c$ from other label $c'$.
\end{proof}

\section{Theoretical Support for $\mathcal{L}_{con}$ from a mutual information perspective} \label{proof_theo2}

\begin{theorem} \label{theo2}
Minimizing contrastive learning loss $L_{con}$ can increase the mutual information between sequence representation and graph representation $I(h;h^+)$.
\end{theorem}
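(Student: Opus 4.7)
The plan is to establish that $\mathcal{L}_{con}$ is, up to an additive constant, an upper bound on $-I(h; h^+)$, so that every training step that decreases $\mathcal{L}_{con}$ increases a valid lower bound on the mutual information. This is the now-classical InfoNCE-as-a-variational-bound argument (Barber--Agakov, van den Oord et al.), specialized to our setting in which the two ``views'' of a peptide are the sequence representation $h = h_{seq}$ and the graph representation $h^+ = h_{graph}$ generated from the same underlying peptide.

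First I would rewrite $\mathcal{L}_{con}$ from Eq.~\ref{L_infonce} in the compact form
\begin{equation}
\mathcal{L}_{con} \;=\; \mathbb{E}\!\left[ -\log \frac{e^{h^\top h^+/\tau}}{e^{h^\top h^+/\tau} + \sum_{j=1}^{K} e^{h^\top h^-_j/\tau}} \right],
\end{equation}
where the expectation is taken over the positive pair $(h, h^+) \sim p(h,h^+)$ and the $K$ i.i.d.\ negatives $h^-_j \sim p(h)$. Next, I would compute the minimizer of this expectation over all measurable critic functions: a standard variational calculation shows that the optimum is proportional to the density ratio $p(h^+ \mid h)/p(h^+)$. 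Substituting that optimal critic into the loss and applying Jensen's inequality to the outer $-\log$ then produces the key inequality
\begin{equation}
I(h; h^+) \;\geq\; \log(K+1) \;-\; \mathcal{L}_{con}.
\end{equation}
Since the left-hand side is fixed by the joint data distribution while the right-hand side strictly increases as $\mathcal{L}_{con}$ decreases, minimizing $\mathcal{L}_{con}$ tightens this lower bound on $I(h; h^+)$, which is exactly the statement of Theorem~\ref{theo2}.

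The main technical obstacle is the realizability assumption implicit in the optimal-critic substitution: the argument requires that the bilinear-softmax critic $e^{h^\top h^+/\tau}$ parametrized by our encoders $f_e$ and $g_e$ be rich enough to approximate the density ratio $p(h^+ \mid h)/p(h^+)$. I would address this in two ways. First, by appealing to the universal approximation capability of the Transformer and GraphSAGE encoders, the inner-product critic can approximate any sufficiently regular ratio to arbitrary accuracy as the hidden dimension and encoder depth grow. Second, I would note that even when exact realizability fails, the weakened bound $I(h; h^+) \geq \log(K+1) - \mathcal{L}_{con} - \epsilon$, with $\epsilon$ absorbing the approximation gap, still guarantees that reducing $\mathcal{L}_{con}$ pushes up a valid lower bound on $I(h; h^+)$, which suffices for the qualitative ``can increase'' claim of the theorem.
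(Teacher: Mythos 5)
Your proposal is correct and follows essentially the same route as the paper's own proof: both are the standard van den Oord--style InfoNCE argument that assumes the learned critic $e^{h^\top h^+}$ realizes the density ratio $\mathbb{P}(h^+\mid h)/\mathbb{P}(h^+)$, substitutes it into the loss, replaces the sum over independent negatives by $K$ times its (unit) expectation, and lower-bounds to obtain $I(h;h^+) \geq \log(\cdot) - \mathcal{L}_{con}$. The only differences are cosmetic --- your constant is $\log(K+1)$ where the paper's approximation yields $\log K$, and you make explicit the critic-realizability caveat that the paper simply assumes.
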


\begin{proof}
Through optimizing contrastive loss, according to Theorem \ref{theo1}, model can separate representations with different downstream task labels, it is naturally to assume that $e^{(h, h^{’})}$ is proportional to $\frac{\mathbb{P}(h'|h)}{\mathbb{P}(h')}$, $\mathbb{P}(h'|h)$ denotes the probability that give representation $h$, $h'$ has same label with $h$, the denominator $\mathbb{P}(h')$ ensures the permutation invariant under $h'$ and $h$, thus we can reformulate contrastive learning loss:
$$
\begin{aligned}
L_{contrastive} &= \mathop{\mathbb{E}}\limits_{\tiny{c, \{c_i^{-}\}_{i=1}^{k} \atop \sim C}}
\mathop{\mathbb{E}}\limits_{\tiny{(x, x^+) \sim D_{c} \atop x_i^{-} \sim D_{c_i^{-}}}}
\left[-\mathop{log}\left(\frac{e^{\left(\tiny{h^{T}h^{+}}\right)}}{e^{\left(\tiny{h^{T}h^{+}}\right)}+\sum_{i=1}^{k}e^{\left(\tiny{h^{T}h_i^{-}}\right)}}\right)\right] \\
&= 
\mathop{\mathbb{E}}
\left[-\mathop{log}\left(\frac{\frac{\mathbb{P}(h^{+}|h)}{\mathbb{P}(h^{+})}}{\frac{\mathbb{P}(h^{+}|h)}{\mathbb{P}(h^{+})}+\sum_{i=1}^{k}\frac{\mathbb{P}(h^{-}|h)}{\mathbb{P}(h^{-})}}\right)\right] \\
&\approx \mathop{\mathbb{E}}
\left[\mathop{log}\left(1+\frac{\mathbb{P}(h^{+})}{\mathbb{P}(h^{+}|h)}k\mathbb{E}\left[\frac{\mathbb{P}(h^{-}_i|h)}{\mathbb{P}(h^{-}_i)}\right]\right)\right] \\
&\overset{1}{=}\mathop{\mathbb{E}}
\left[\mathop{log}\left(1+\frac{\mathbb{P}(h^{+})}{\mathbb{P}(h^{+}|h)}k\right)\right] \\
&\geq \mathop{\mathbb{E}}
\left[\mathop{log}\left(\frac{\mathbb{P}(h^{+})}{\mathbb{P}(h^{+}|h)}k\right)\right] \\
&= -I(h^{+};h)+log(K)
\end{aligned}
$$
which is based on the fact that $h_i^{-}$ is independently sampled, i.e. $\mathbb{P}(h_i^{-}|h)=\mathbb{P}(h_i^{-})$.

\end{proof}

\section{Pseudocodes of Training and Testing Phases of RepCon (Q1)} \label{appen3}

The pseudocodes of both training and testing phases of RepCon are provided in Algorithm \ref{alg1}.

\begin{algorithm}[h]
\caption{Training and Testing of RepCon}
\label{alg1}
\begin{algorithmic}[h]
\REQUIRE
Train and test data, hyper-parameter $\lambda$ (Eq.~\ref{L_train}), learning rate $\gamma$

\STATE \textit{-Training Phase-}

\STATE Initialize learnable parameters randomly

\WHILE{not converge}
\STATE Sample minibatch $B$ from train data
\FOR{each sample $(x_{seq},y) \in B$}
\STATE Transform $x_{seq}$ from FASTA to molecular graph $x_{graph}$
\STATE Compute supervised loss by Eq.~\ref{L_pred}
\STATE Compute unsupervised InfoNCE loss by Eq.~\ref{L_infonce}
\ENDFOR
\STATE Calculate the gradients on learnable parameters by backpropagation
\STATE Update learnable parameters with learning rate $\gamma$
\ENDWHILE

\STATE \textit{-Testing Phase-}
\STATE Load the sequential encoder and predictor
\FOR{each sample $x_{seq}$ in test data}
\STATE Inference the label by Eq.~\ref{inference}
\ENDFOR

\end{algorithmic}
\end{algorithm}

\section{Experiment-related}

\subsection{Details of Datasets} \label{appen4}

The statistics of datasets AP, RT, AMP and PepDB are shown in Table \ref{statistics}.
In addition to the parameters addressed in the table, the four datasets uniformly contain 20 natural amino acids.
In dataset PepDB, the classes 0-2 stand for: class 0 antimicrobial peptides; class 1 peptide hormones; class 2 toxins and venom peptides.
In dataset AMP, the class 1 stands for the antimicrobial peptide, and the class 0 stands for the non-antimicrobial peptide.

\begin{table}[h]
\caption{Statistics of datasets.}
\centering
\begin{tabular}{lcccc}
\hline
Dataset & Task           & Num. of Samples   & Classes & Max. Length \\ \hline
AP      & Regression     & 62,159  & -       & 10     \\
RT      & Regression     & 121,215 & -       & 50     \\
AMP     & Classification & 9,321   & 2       & 50     \\
PepDB   & Classification & 7,016   & 3       & 50     \\ \hline
\end{tabular}
\label{statistics}
\end{table}

\subsection{Implementation of Co-modeling Baselines (Q2)} \label{appen5}

Co-modeling Baselines include weighted sum (WS), concatenation (Concat), cross attention (CA), and compact bilinear pooling (CBP), which are fusion methods pluged in our proposed co-modeling framework.
The inputs of fusion block are the output representations, $h_{seq}$ and $h_{graph}$, from the sequential and graphical encoders.
Denote the output representation from fusion block is $h_{co-rep}$, the implementation of baseline fusion methods are detailed as follows.

\noindent \textbf{Weighted Sum (WS)} simply merges the input representations, denoted as:
\begin{equation}
    h_{co-rep}=\delta h_{seq}+(1-\delta) h_{graph},
    \nonumber
\end{equation}
where $\delta$ is a balance hyperparameter. In the experiments, it is set as 0.5.

\noindent \textbf{Concatenation (Concat)} aligns the input representations and results in a longer dimensional representation, denoted as:
\begin{equation}
    h_{co-rep}=[h_{seq}, h_{graph}].
    \nonumber
\end{equation}

\noindent \textbf{Cross Attention (CA)} an extension of the self-attention mechanism used for aligning two representations. The mathematical expression can be denoted as:
\begin{equation}
    h_{co-rep}=softmax(\frac{h_{graph}\; {h_{graph}}^T}{\sqrt{d_{h_{graph}}}})h_{seq},
    \nonumber
\end{equation}
where $h_{graph}$ is considered as the key and query, $h_{seq}$ is considered as the value, and $d_{h_{graph}}$ is the dimension of $h_{graph}$.

\noindent \textbf{Compact Bilinear Pooling (CBP)} combines features from different sources or modalities, aiming to capture rich interactions. The formulation of CBP is denoted as:
\begin{equation}
h_{co-rep} = \mathcal{F}^{-1}(\mathcal{F}(h_{seq}) \odot \mathcal{F}(h_{graph})),
    \nonumber
\end{equation}
where $\mathcal{F}(\cdot)$ represents the Fourier Transform operator, $\mathcal{F}^{-1}(\cdot)$ represents the inverse Fourier Transform operator, and $\odot$ represents element-wise multiplication.

\subsection{Evaluations for Choosing Backbones (Q1)} \label{appen6}

\begin{table}[]
\caption{Evaluations on sequential and graphical backbone models. Each backbone is connected with a uniformed MLP predictor. Each experimental result is conducted under random seed 5. The result of the best performed model in each group is bolded. As the result of this experiment, we select Transformer and GraphSAGE as the backbones of the sequential encoder and graphical encoder for evaluating co-modeling methods.}
\centering
\begin{tabular}{llcccc}
\hline
& \multirow{2}{*}{Model} & \multicolumn{2}{c}{Regression (MAE)\;\;\;\;\;} & \multicolumn{2}{c}{Classification (Acc.)} \\
&                        & AP                & RT               & AMP                & PepDB                \\ \hline
\multirow{4}{*}{Sequential Backbone}                 & RNN                    & 4.52E-2                  & 1.95                 & 82.5\%                   & 93.3\%                     \\
& LSTM                   & 4.28E-2                  & 1.79                 & 83.5\%                   & 95.4\%                     \\
& Bi-LSTM                & 4.25E-2                  & \textbf{1.54}                 & 82.8\%                   & 95.7\%                     \\
& \textbf{Transformer}            & \textbf{3.81E-2}                  & 1.57                 & \textbf{84.7\%}                   & \textbf{96.1\%}                     \\ \hline
\multirow{4}{*}{Graphical Backbone} & GCN                    & 4.11E-2                  & 3.34                 & 84.2\%                   & 93.4\%                     \\
& GAT                    & 4.09E-2                  & 2.99                 & \textbf{84.7\%}                   & 94.5\%                     \\
& PNA                    &  3.93E-2                 & 2.72                 & 83.5\%                   & 93.7\%                     \\
& \textbf{GraphSAGE}              & \textbf{3.89E-2}                  & \textbf{2.57}                 & \textbf{84.7\%}                   & \textbf{95.2\%}                     \\ \hline
\end{tabular}
\label{backbones}
\end{table}

In this section, we show experimental results for sequential  and graphical backbone models on all datasets involved in Section \ref{exp} under the same experimental settings.
Sequential backbone models include RNN, LSTM, Bi-LSTM and Transformer.
Graphical backbone models include GCN, GAT, PNA, and GraphSAGE with proper hyperparameter settings.

The results of above mentioned backbones are shown in Table \ref{backbones}.
Among the sequential backbones, Transformer ranks the first on three datasets.
On dataset RT, Transformer is only 0.03 of MAE behind the best-performed model Bi-LSTM.
Among the graphical backbones, GraphSAGE ranks the first on all tested datasets, which means it is consistently outperforming other baselines.
Therefore, Transformer and GraphSAGE are chosen as the sequential and graphical encoders of the co-modeling framework. (Q1)
However, to further improve the performance, the encoders can be replaced by any state-of-the-art feature extraction module for the specific downstream task.

\subsection{Amino acid-wise attribution and MAE of RepCon (Q3\&Q4)} \label{appen7}

In this section, we provide the evaluations on RepCon's attribution and MAE in a statistical amino acid level, shown in Fig.~\ref{fig_sup1}.
The left subfigure in Fig.~\ref{fig_sup1} shows the amino acid-wise average attribution value, same as the left subfigure in Fig.~\ref{fig1}, for RepCon and backbones.
The right subfigure in Fig.~\ref{fig_sup1} shows the amino acid-wise MAE, i.e., the average absolute error for peptide samples containing at least one target amino acid.

From the left subfigure, the attribution performance of RepCon demonstrates the extraction of information from Transformer and GraphSAGE to RepCon.
From the right subfigure, the improvement in MAE proves that the RepCon's evolution in attribution is in a beneficial direction.
In addition, comparing Transformer and GraphSAGE (as they have competitive performance), amino acids with large differences in attribution in the left also show large differences in MAE, e.g., amino acids A, C, D, N, and Q. In contrast, amino acids with similar attribution between Transformer and GraphSAGE, e.g., F, W, and Y, show similar MAE.

\begin{figure}[h] 
    \centering
    \subfloat{\includegraphics[width=0.49\linewidth]{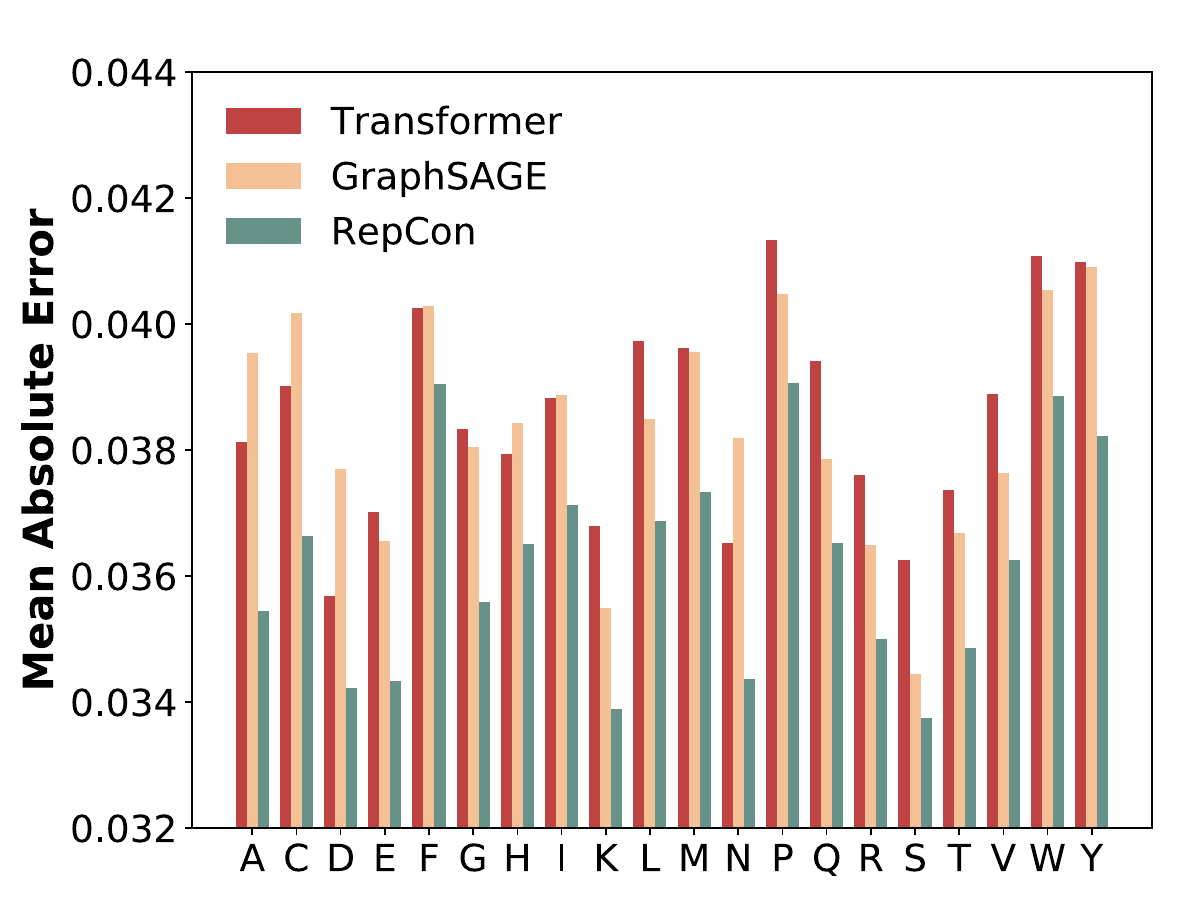}}
    \subfloat{\includegraphics[width=0.49\linewidth]{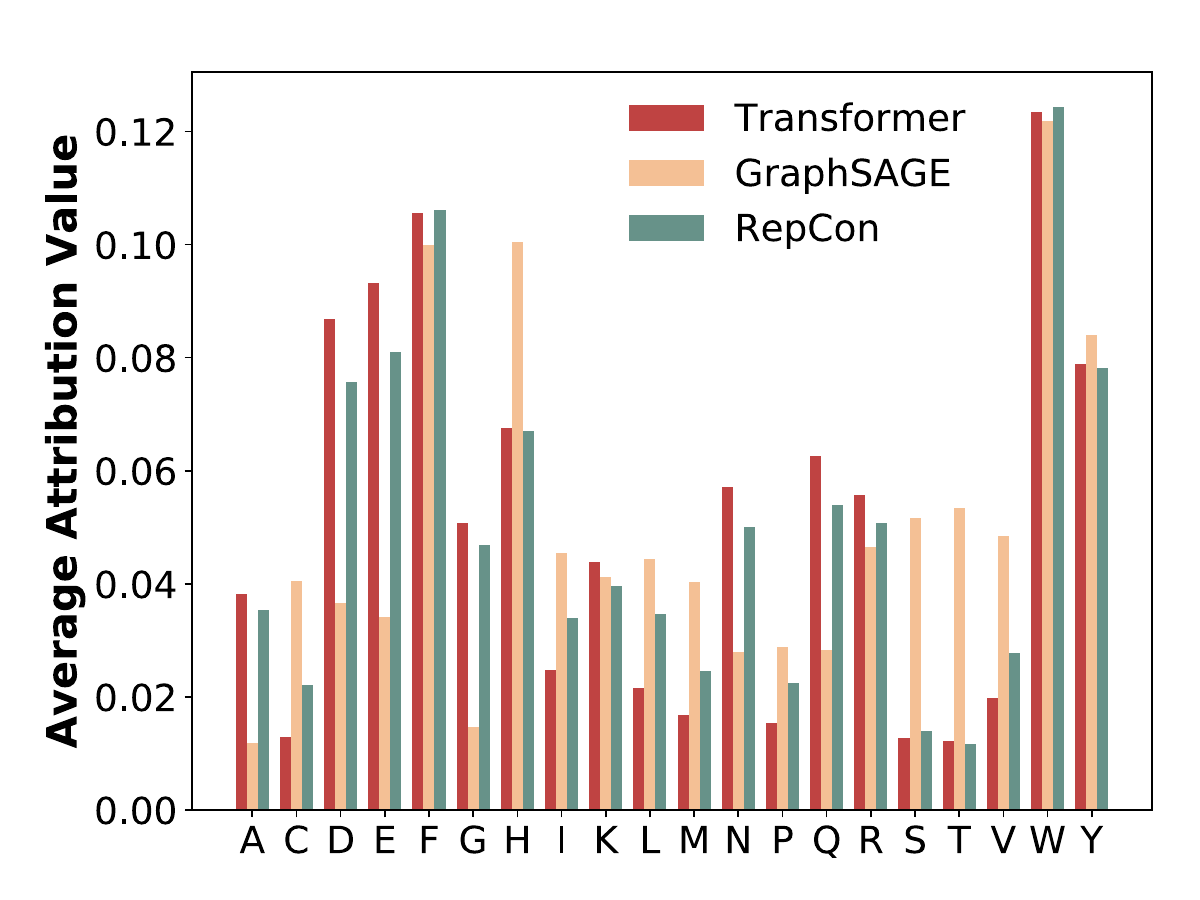}}
    \caption{Amino acid-wise attribution and MAE of RepCon compared with backbone Transformer and backbone GraphSAGE on AP regression task.}
    \label{fig_sup1}
\end{figure}

\subsection{RepCon's Loss Curves (Q3)} \label{appen8}

\begin{figure}[h] 
    \centering
    \subfloat{\includegraphics[width=0.45\linewidth]{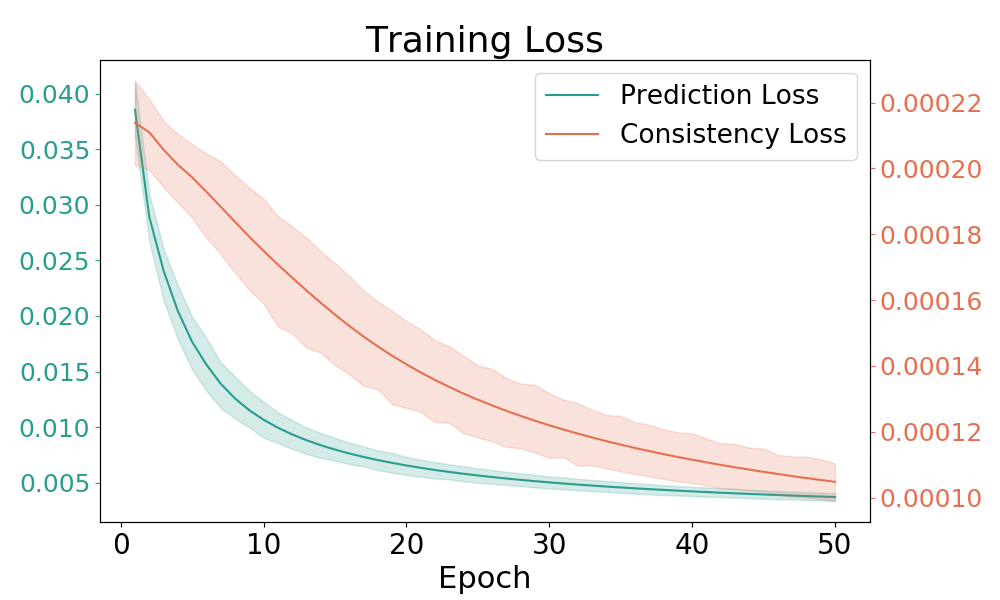}}
    \subfloat{\includegraphics[width=0.45\linewidth]{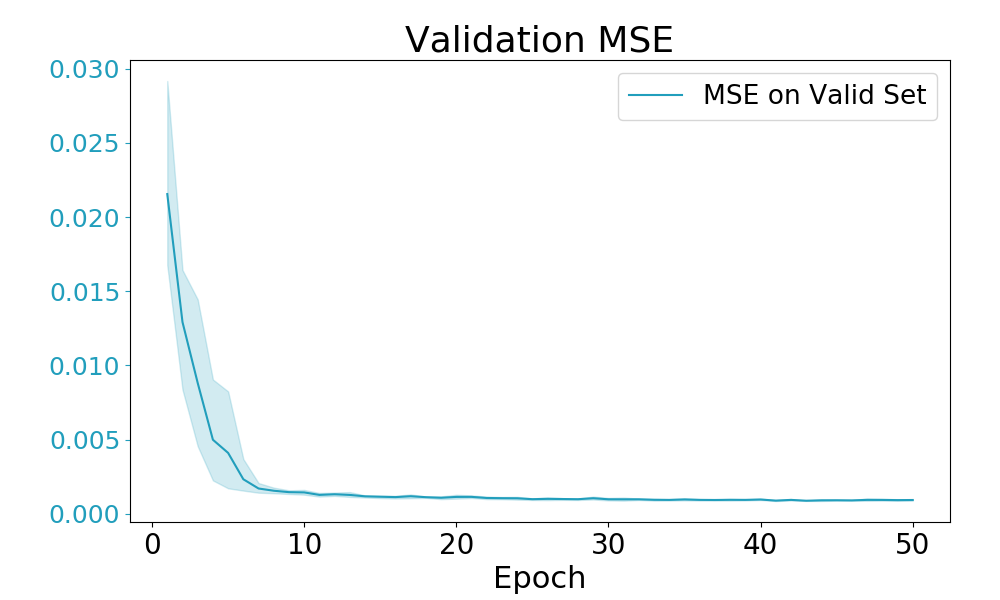}}
    \caption{(Left) loss curves for prediction loss $\mathcal{L}_{pred}$ and consistency loss $\mathcal{L}_{con}$; (Right) MSE performance on validation set. }
    \label{supfig2}
\end{figure}

In Fig.~\ref{supfig2}, the left subfigure shows the convergence trend of terms prediction loss $\mathcal{L}_{pred}$ and consistency loss $\mathcal{L}_{con}$ in the training loss consistency loss $\mathcal{L}_{train}$ on dataset AP, and the right subfigure shows the MSE of our proposed RepCon on the validation set.
The horizontal axis represents the epoch and the vertical axis represents the value of loss or MSE.
In the left subfigure, the vertical axis for line of prediction loss is indicated by the green font on the left, and the vertical axis for line of consistency loss is indicated by the red font on the right.
From Fig.~\ref{supfig2}, we observe that the training loss is steadily converging.
The validation performance is improving with training and no overfitting is observed.

\subsection{Sensitivity on Hyperparameter (Q1)} \label{appen9}

This section provide the sensitivity of hyperparameter $\lambda$, which is used to balance the loss terms in Eq.~\ref{L_train}. 
The experimental results for $\lambda$'s sensitivity are shown in Fig.~\ref{sup_sensitivity}.
We have conducted one experiment for each $\lambda$ value on all four datasets involved in the experiment section.
From the results, we observe that the proper value for $\lambda$ in classification tasks is in the interval of [0.01, 0.1].
Regarding to regression tasks, the proper $\lambda$ value is in the interval of [1e-5, 1e-4].
From the view of ablation study, the existence of the contrastive loss term benefits the training of RepCon, as the general performance of most experiments with $\lambda>0$ surpasses the backbone models.
We adopt the proper value for $\lambda$ in each downstream datasets.

\begin{figure}[h] 
    \centering
    \subfloat{\includegraphics[width=0.45\linewidth]{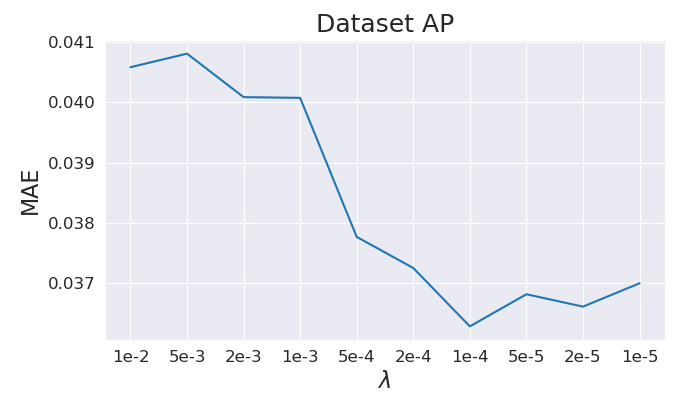}}
    \subfloat{\includegraphics[width=0.45\linewidth]{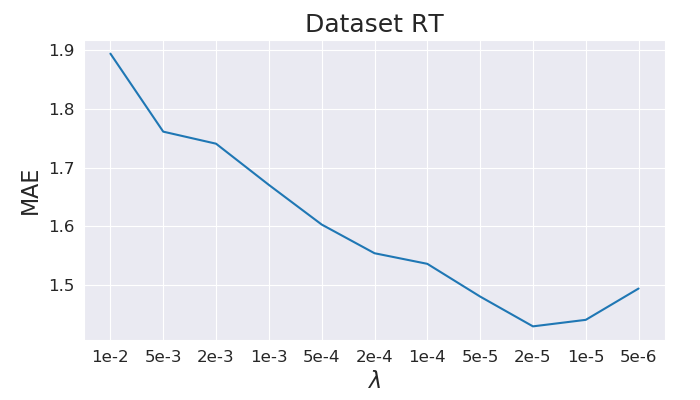}}
    \\
    \subfloat{\includegraphics[width=0.45\linewidth]{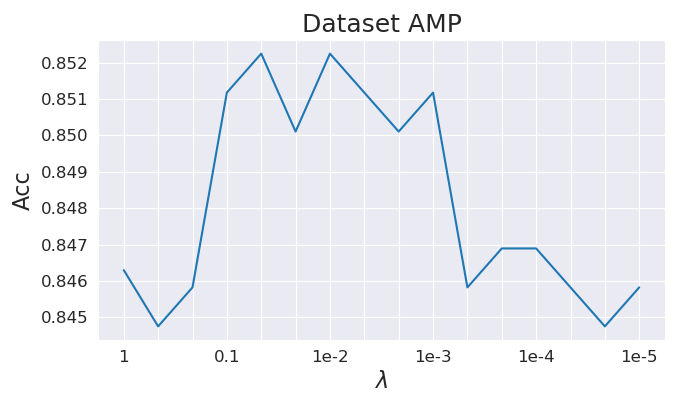}}
    \subfloat{\includegraphics[width=0.45\linewidth]{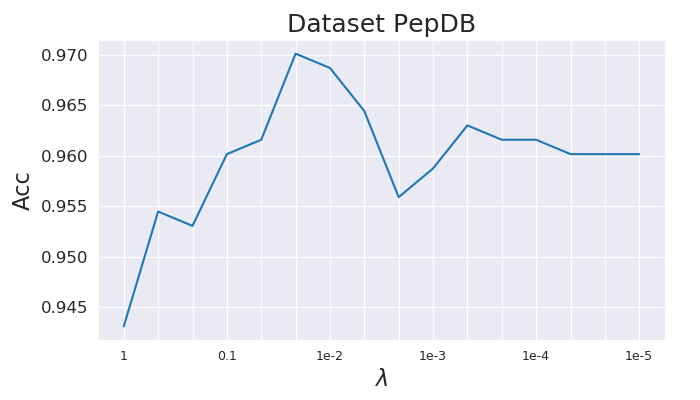}}
    \caption{Sensitivity experiments for hyperparameter $\lambda$. Due to the randomness of model initialization and training, it is reasonable that MAE corresponding to continuous $\lambda$ fluctuates. This experiment is mainly used for the selection of $\lambda$.}
    \label{sup_sensitivity}
\end{figure}

\subsection{Attribution Comparison for Transformer and GraphSAGE in Cases}

\begin{figure}[htbp]
  \centering
  \subfloat{
    \includegraphics[width=0.249\textwidth]{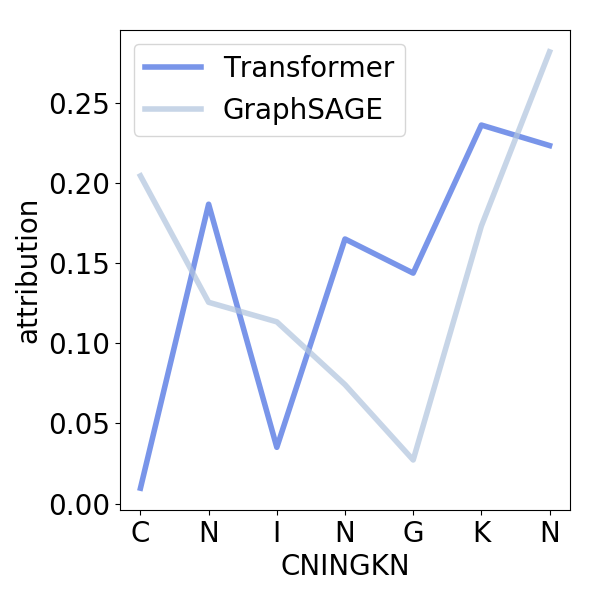}
  }
  \subfloat{
    \includegraphics[width=0.249\textwidth]{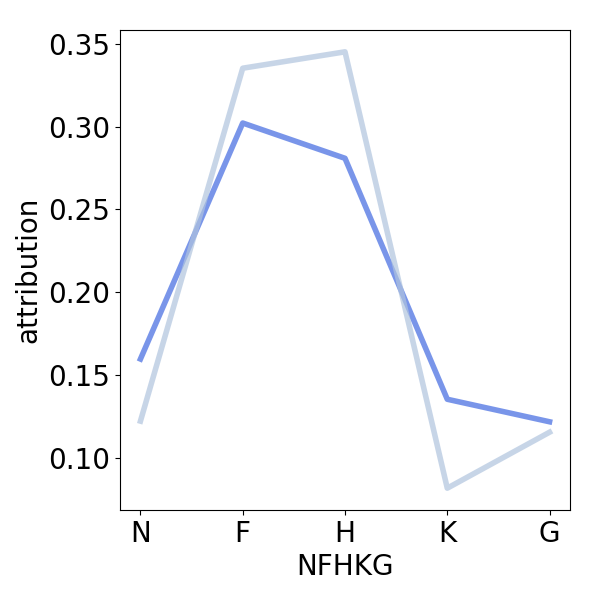}
  }
  \subfloat{
    \includegraphics[width=0.249\textwidth]{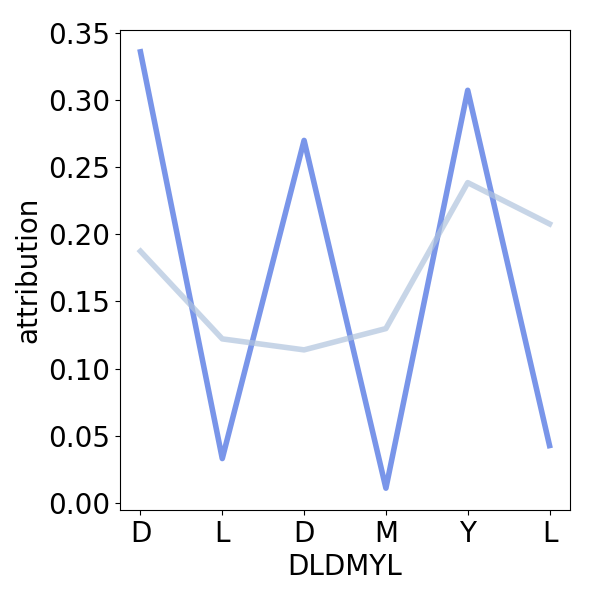}
  }
  \subfloat{
    \includegraphics[width=0.249\textwidth]{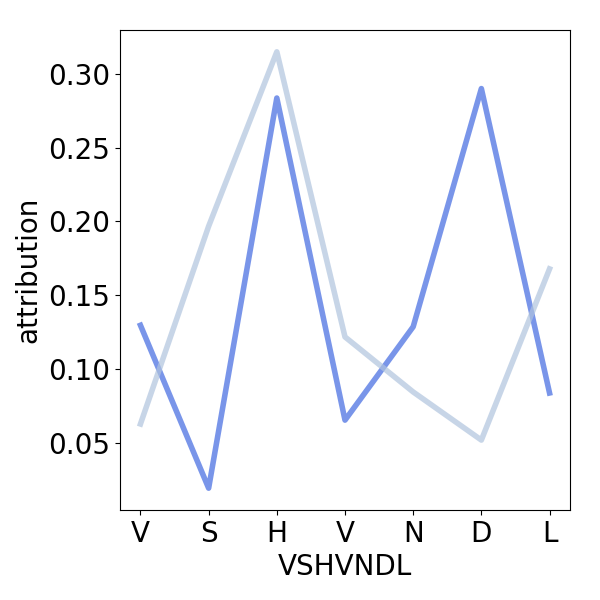}
  }
  \\
  \subfloat{
    \includegraphics[width=0.249\textwidth]{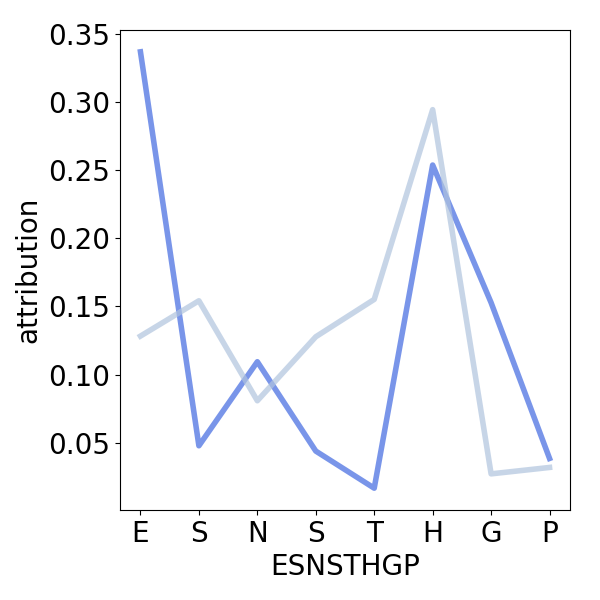}
  }
  \subfloat{
    \includegraphics[width=0.249\textwidth]{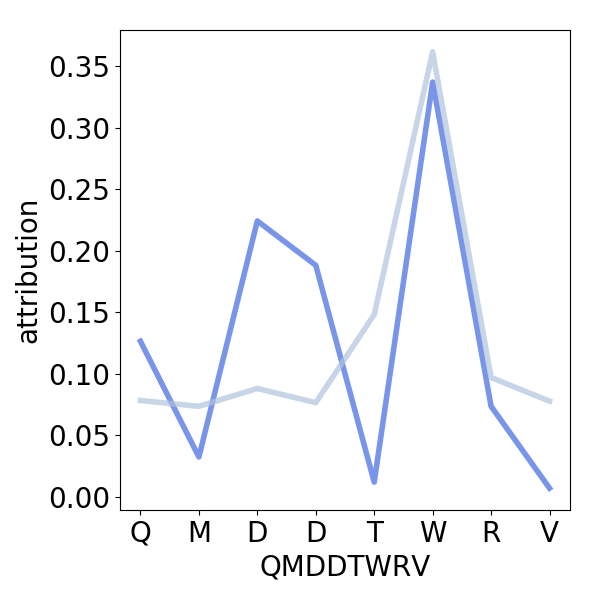}
  }
  \subfloat{
    \includegraphics[width=0.249\textwidth]{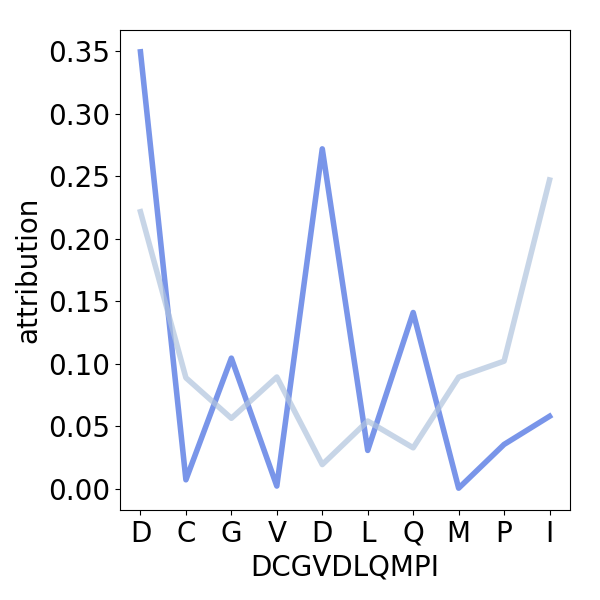}
  }
  \subfloat{
    \includegraphics[width=0.249\textwidth]{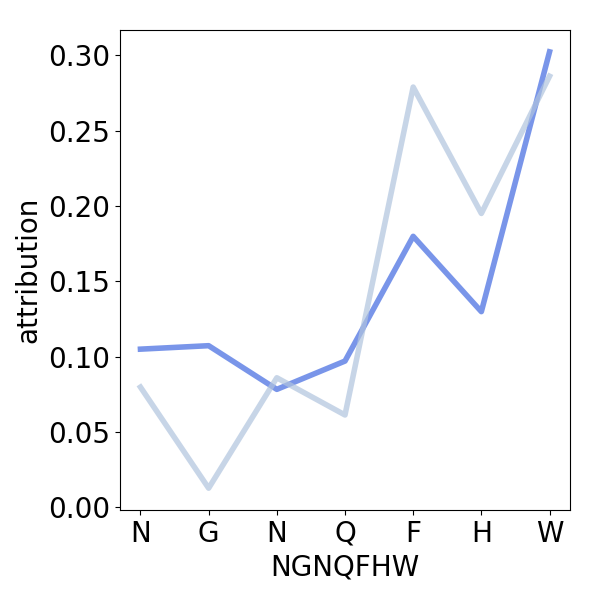}
  }
  \\
  \subfloat{
    \includegraphics[width=0.249\textwidth]{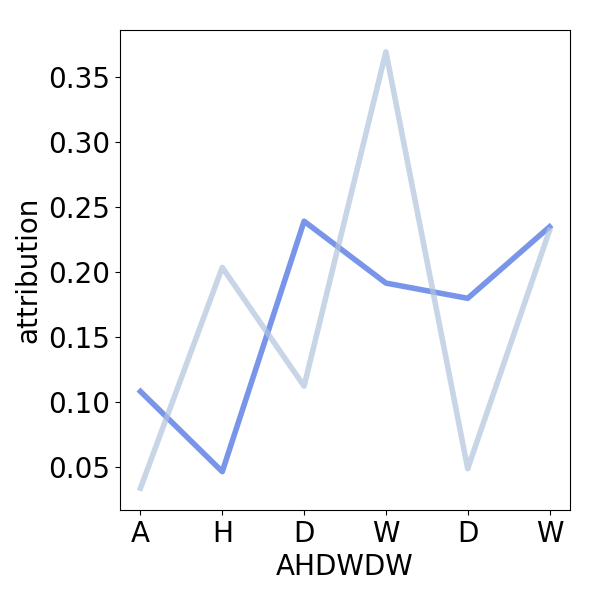}
  }
  \subfloat{
    \includegraphics[width=0.249\textwidth]{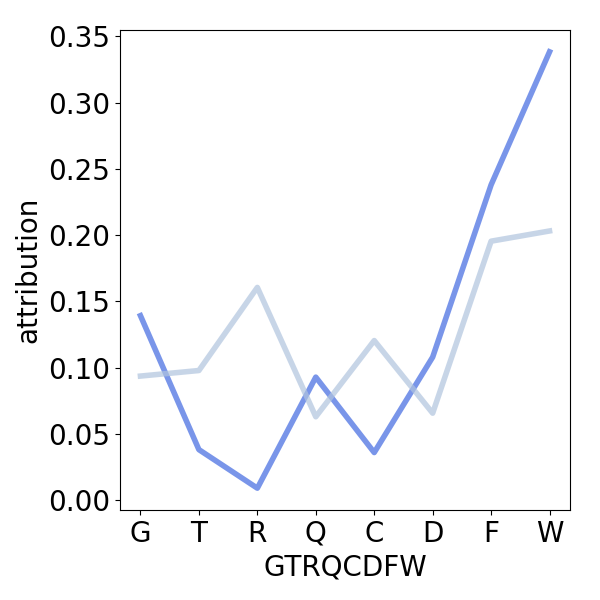}
  }
  \subfloat{
    \includegraphics[width=0.249\textwidth]{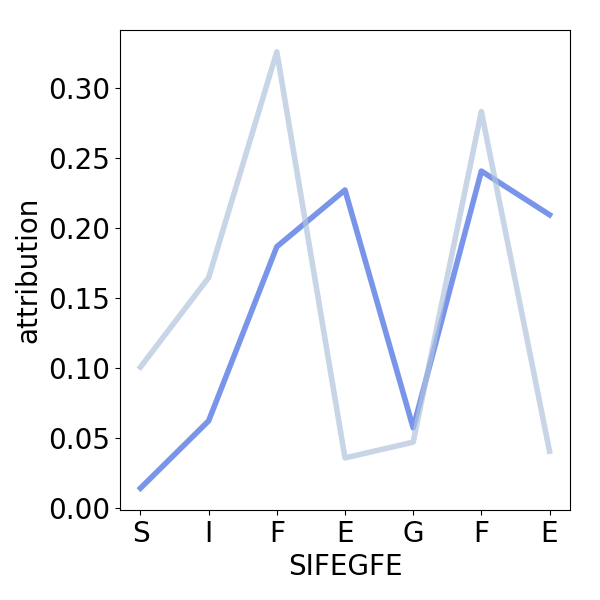}
  }
  \subfloat{
    \includegraphics[width=0.249\textwidth]{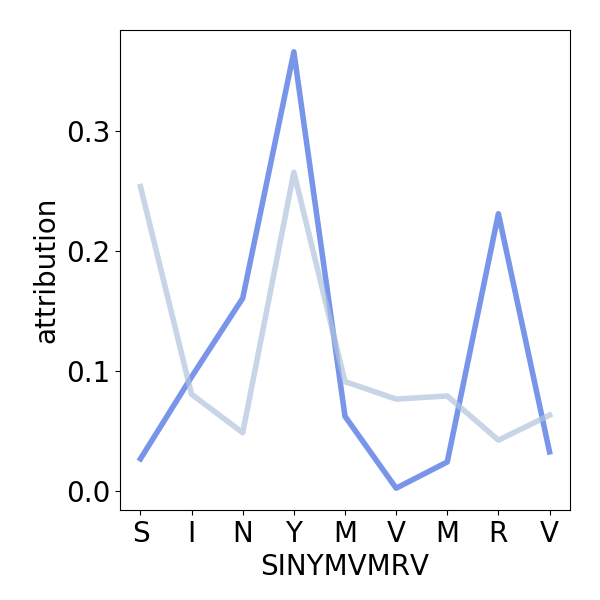}
  }
  \\
  \subfloat{
    \includegraphics[width=0.249\textwidth]{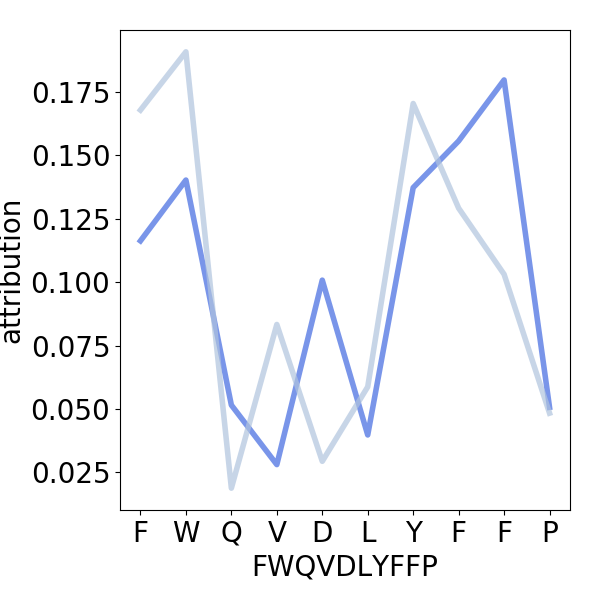}
  }
  \subfloat{
    \includegraphics[width=0.249\textwidth]{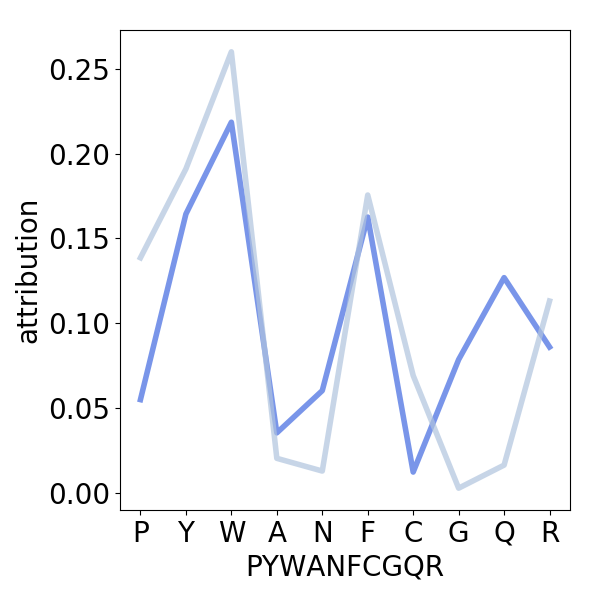}
  }
  \subfloat{
    \includegraphics[width=0.249\textwidth]{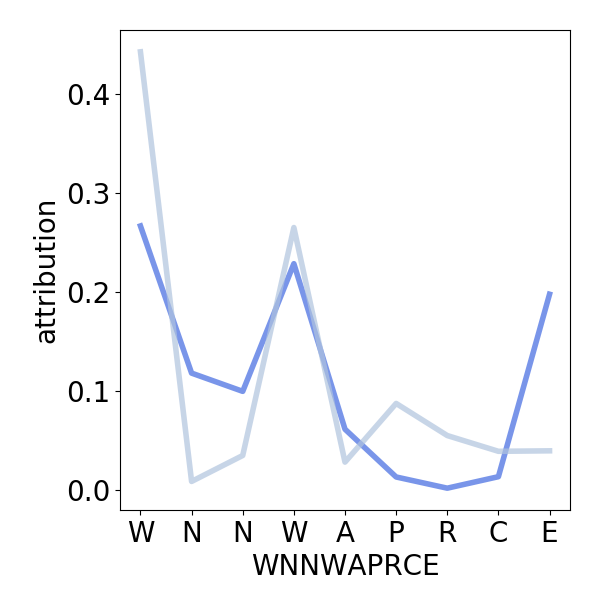}
  }
  \subfloat{
    \includegraphics[width=0.249\textwidth]{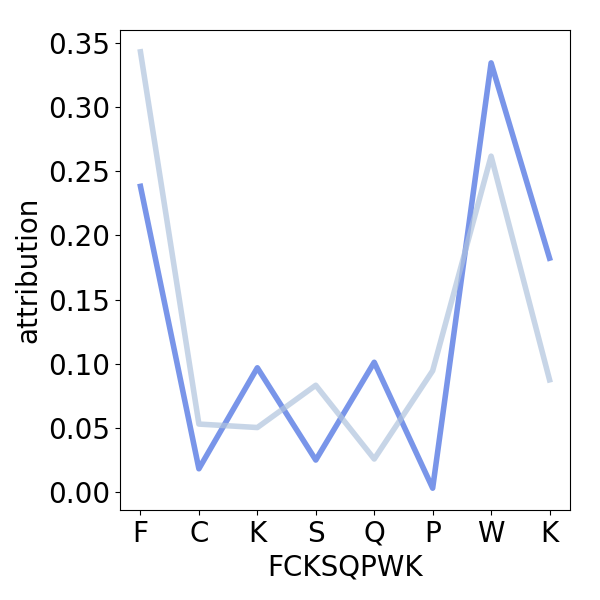}
  }
  \caption{Comparison of Transformer and GraphSAGE attribution distributions on random selected peptide examples on the dataset AP. The horizontal axis represents the amino acid components in the peptides and the vertical axis represents the normalized attribution values. The attribution distributions for Transformer are plotted in royal blue, and those for GraphSAGE are plotted in light steel blue.}
  \label{supfig:cases}
\end{figure}

This is a complementary experiment on attribution, in order to show in more detail the differences between Transformer and GraphSAGE.
Fig.~\ref{supfig:cases} shows the comparison of Transformer and GraphSAGE attribution distributions on random selected peptide examples on the dataset AP.
We observe high consistency in the attribution distributions in some of the peptide samples, such as NFHKG (row-1, column-2), NGNQFHW (row-2, column-4), and FCKSQPWK (row-4, column-4).
In these peptides, the model's determination of important amino acids in peptides is similar.
However, in many of the samples, we observed large differences in the model's determination of the important components.
For example, in ESNSTHGP (row-2, column-1), AHDWDW (row-3, column-1), and SIFEGFE (row-3, column-3), Transformer has a high attribution for the negatively charged amino acids D amd E (which in the AHDWDW samples even exceeds that of the W that contains an aromatic ring).
To summarize, for the majority of examples, there are similar attribution distributions for most components in some peptides, yet there are dissimilar attribution values for a few peptide compositions, and these dissimilarities tend to be concentrated on a few specific amino acids.

\end{document}